\pgfplotsset{compat=newest}
\DeclareMathOperator*{\argmin}{arg\,min}
\newcommand{\TMT}{\mathsf{T}}
\newcommand{\noise}{\varepsilon}
\newcommand{\cD}{{\mathcal{D}}}
\newcommand{\cB}{{\mathcal{B}}}
\newcommand{\cP}{{\mathcal{P}}}
\newcommand{\cY}{{\mathcal{Y}}}
\newcommand{\cT}{{\mathcal{T}}}
\newcommand{\cM}{{\mathcal{M}}}
\newcommand{\bhl}{\boldsymbol{\hat{\lambda}}}
\newcommand{\hl}{\hat{\lambda}}
\newcommand{\bl}{\boldsymbol{\lambda}}
\newcommand{\cX}{{\mathcal{X}}}
\newcommand{\normal}{{\mathcal{N}}}
\newcommand{\Sp}{{\mathbb{S}}}
\newcommand{\R}{{\mathbb{R}}}
\newcommand{\E}{{\mathbb{E}}}
\newcommand{\vx}{{\mathbf{x}}}
\newcommand{\va}{{\mathbf{a}}}
\newcommand{\vw}{{\mathbf{w}}}
\newcommand{\rf}{{\mathsf{RF}}}
\newcommand{\af}{\sigma}
\newtheorem{assumption}{Assumption}
\newtheorem{theorem}{Theorem}
\newtheorem{remark}{Remark}
\newtheorem{prop}{Proposition}
\newtheorem{definition}{Definition}
\title{Aligning Model Properties via Conformal Risk Control}
\author[1]{William Overman\thanks{Email: woverman@stanford.edu}}
\author[2]{Jacqueline Jil Vallon\thanks{Email: jjvallon@alumni.stanford.edu}}
\author[1]{Mohsen Bayati\thanks{Email: bayati@stanford.edu}}
\affil[1]{Graduate School of Business, Stanford University}
\affil[2]{Management Science and Engineering, Stanford University}
\date{}
\begin{document}

\maketitle

\begin{abstract}
AI model alignment is crucial due to inadvertent biases in training data and the underspecified machine learning pipeline, where models with excellent test metrics may not meet end-user requirements. While post-training alignment via human feedback shows promise, these methods are often limited to generative AI settings where humans can interpret and provide feedback on model outputs. In traditional non-generative settings with numerical or categorical outputs, detecting misalignment through single-sample outputs remains challenging, and enforcing alignment during training requires repeating costly training processes.
In this paper we consider an alternative strategy. We propose interpreting model alignment through property testing, defining an aligned model $f$ as one belonging to a subset $\cP$ of functions that exhibit specific desired behaviors. We focus on post-processing a pre-trained model $f$ to better align with $\cP$ using conformal risk control. Specifically, we develop a general procedure for converting queries for testing a given property $\cP$ to a collection of loss functions suitable for use in a conformal risk control algorithm. We prove a probabilistic guarantee that the resulting conformal interval around $f$ contains a function approximately satisfying $\cP$. We exhibit applications of our methodology on a collection of supervised learning datasets for (shape-constrained) properties such as monotonicity and concavity. The general procedure is flexible and can be applied to a wide range of desired properties. Finally, we prove that pre-trained models will always require alignment techniques even as model sizes or training data increase, as long as the training data contains even small biases.
\end{abstract}

\section{Introduction}

The emergence of large foundation models has increased the attention to the problem of alignment. Aligned models are artificial intelligences designed to pursue goals that align with human values, principles, and intentions \citep{leike2018scalable, ouyang2022training, hendrycks2023aligning, ngo2024alignment}. Although the alignment problem is predominantly examined in the context of potential artificial general intelligence (AGI), large language models (LLMs), and reinforcement learning (RL) agents, it also has roots in the modern machine learning pipeline \citep{damour2022underspecification}. Motivated by this, we introduce a broader notion of alignment in this paper, extending beyond the aforementioned generative models to include even tabular regression models.

As an example, consider a regression task where property $\cP$ represents models that are monotonically decreasing in a given feature (covariate). For example, predicting cancer patient survival should be monotonically decreasing in cancer stage \citep{vallon2022clinically, vallon2024onaligning}. Constraining a prediction model during training to maintain monotonicity in this feature can be viewed as a form of alignment. For a pre-trained model $f$ that was trained without such constraints, ensuring monotonically decreasing predictions in this feature can be more complex. This complexity arises particularly in non-generative settings where the user cannot update $f$ or obtain any outputs other than point predictions $f(X)$ for a given input $X$.

In this work, we propose an approach to aligning a pre-trained model $f$ that is motivated by property testing \citep{Ron, Goldreich_2017} and conformal risk control \citep{angelopoulos2024conformal}. Property testing aims to design efficient algorithms for determining membership to the set $\cP$ of functions with a given property, that require fewer resources than learning algorithms for $\cP$ \citep{Ron}. This is particularly relevant for modern deep learning, where a user may need to determine if a pre-trained model $f$ belongs to $\cP$ without the resources to train a model of comparable size.

Property testing algorithms use local queries to determine, with high probability, whether a function has a given global property or is far from having it. We map such queries for a property $\cP$ to a set of loss functions, which we then use in a conformal risk control procedure \citep{angelopoulos2024conformal} to establish a notion of alignment for $\cP$. We prove that this procedure yields a conformal interval around $f$ containing a function close to $\cP$.

We demonstrate our methodology on real-world datasets for the properties of monotonicity and concavity. Motivated by the potential for systematic under- or over-estimation bias in $f$, we provide a straightforward extension of \citet{angelopoulos2024conformal} to obtain asymmetric conformal intervals with multi-dimensional parameters. While we examine both monotonicity and concavity constraints, the majority of our focus is on monotonicity, as these constraints have been shown to promote crucial aspects of alignment to human values, such as fairness and adherence to social norms \citep{wang2020deontological}.

While our methodology provides a way to align pre-trained models, one may question whether such techniques will remain necessary as AI capabilities advance. Given the outstanding capabilities of modern AI models with substantially large numbers of parameters and training data, one may argue that the alignment problem may naturally disappear as such advances continue \citep{kaplan2020scaling}. However, another contribution of this paper is to refute this argument in a stylized setting, building on recent advances in the theory of linearized neural networks \citep{mei2022generalization,misiakiewicz2023six}. Specifically, we show that increasing the size of the training data or the number of parameters in a random feature model (a theoretically tractable neural network proxy where hidden layer weights are randomly initialized and fixed \citep{rahimi2007random}) cannot help it satisfy a property $\cP$, if the pre-training data has biased labels. Our simulations show that the result holds even if only a small fraction of the training labels are impacted by the bias.

Summarizing our main contributions, we: (1) introduce an alignment perspective based on property testing, (2) use conformal risk control to post-process predictions of pre-trained models for better alignment, and (3) demonstrate that increasing training data and parameters in a random feature model does not eliminate the need for alignment. We discuss related work in Section \ref{sec:related}, particularly our connections to \citet{mitigating}, who use conformal risk control to address large language model hallucinations \citep{hallucinating}.

\section{Preliminaries}

In this section we provide key definitions drawn from property testing as well as a condensed overview to conformal prediction and conformal risk control. We provide a short introduction to propery testing in Appendix \ref{property-testing-appendix} and an extensive introduction to the field can be found in \cite{Goldreich_2017}. 

\subsection{Properties and Property Testing for Set-Valued Functions} \label{property-testing}

Our perspective on alignment in this work is motivated by the field of property testing \citep{Goldreich_2017,Ron}. Property testing studies algorithms that, by making a small number of local queries to a large object (such as a function or a graph), can determine whether the object has a certain property or is significantly far from having it.

Classic examples include linearity testing of Boolean functions \citep{blum}, testing whether a function is a low-degree polynomial \citep{lowdeg1,lowdeg2}, and testing $k$-juntas \citep{junta}. These algorithms generally operate by randomly sampling and querying the object, leveraging local information to infer global properties.

In this work, we focus on \emph{set-valued functions}, which are functions that map elements of a domain $\mathcal{X}$ to subsets of a codomain $\mathcal{Y}$, i.e., $F: \mathcal{X} \to 2^{\mathcal{Y}}$. While the standard definitions of property testing are technically sufficient for our purposes—since we can consider set-valued functions as functions with range $2^{\mathcal{Y}}$—we introduce specialized definitions to maintain clarity and to facilitate the transition between discussing $\mathcal{Y}$ and $2^{\mathcal{Y}}$.

\begin{definition}[Satisfying and Accommodating a Property]
Let \textbf{property} $\mathcal{P}$ denote a specific subset of all functions that map $\mathcal{X}$ to $\mathcal{Y}$. A function $f: \mathcal{X} \to \mathcal{Y}$ \textbf{satisfies} the property $\mathcal{P}$ if $f \in \mathcal{P}$.

A set-valued function $F: \mathcal{X} \to 2^{\mathcal{Y}}$ \textbf{accommodates} a property $\mathcal{P}$ if there exists a function $g \in \mathcal{P}$ such that $g(x) \in F(x)$ for all $x \in \mathcal{X}$.
\end{definition}

Intuitively, $F$ accommodates $\mathcal{P}$ if it contains at least one function $g$ satisfying $\mathcal{P}$ within its possible outputs.

We extend the notion of $\varepsilon$-farness from a property (as defined in Appendix~\ref{property-testing-appendix}) to set-valued functions. For set-valued functions, we measure the distance based on how often the outputs of any function $g \in \mathcal{P}$ fall within the sets provided by $F$.

\begin{definition}[$\varepsilon$-Faraway]
For a set-valued function \( F: \cX \to 2^\cY \), a distribution \( \cD \) over \( \cX \), \( \varepsilon > 0 \), and a property \( \cP \), we say \( F \) is \(\varepsilon\)-\textbf{Faraway} from \(\cP\) with respect to \(\cD\) if \( \delta_{\cP, \cD}(F) > \varepsilon \), where
\[
\delta_{\cP, \cD}(F) \overset{\text{def}}{=} \inf_{g \in \cP} \delta_{\cD}(F, g) \quad \text{and} \quad \delta_{\cD}(F, g) \overset{\text{def}}{=} \Pr_{X \sim \cD}[g(X) \not \in F(X)].
\]
\end{definition}

\textbf{Note.} Throughout this work, we assume that $\cD$ is the empirical distribution of a fixed and finite calibration dataset, and thus has finite support. While this assumption is not strictly necessary, most property testing results are over finite domains. Property testing over functions with Euclidean domains is a in general a difficult problem, though there have been notable recent successes \citep{fleming, arora}..

With these definitions in place, we can define \emph{testers} for set-valued functions. We focus on \emph{one-sided error testers}, which are algorithms that take in a set-valued function $F$, a distribution $\cD$, and a distance parameter $\varepsilon$ and output either Accept or Reject. These algorithms never reject a function that accommodates the property. The standard definition of one-sided error testers (provided in Appendix~\ref{property-testing-appendix}) extends naturally to set-valued functions by replacing the notion of satisfying a property with accommodating it.

\begin{definition}[One-Sided Error Tester for Set-Valued Functions]
A \emph{one-sided error tester} for a property $\mathcal{P}$ in the context of set-valued functions is a probabilistic oracle machine $\mathcal{M}$ that, given a distance parameter $\varepsilon > 0$, oracle access to a set-valued function $F : \mathcal{X} \to 2^\mathcal{Y}$, and oracle access to samples from a fixed but unknown distribution $\mathcal{D}$ over $\mathcal{X}$, satisfies:
\begin{enumerate}
    \item If $F$ accommodates $\mathcal{P}$, then $\Pr[\mathcal{M}^{F,\mathcal{D}}(\varepsilon) = \text{Accept}] = 1$.
    \item If $F$ is $\varepsilon$-Faraway from $\mathcal{P}$ with respect to $\mathcal{D}$, then $\Pr[\mathcal{M}^{F,\mathcal{D}}(\varepsilon) = \text{Accept}] \leq \frac{1}{3}$.
\end{enumerate}
Here, $\mathcal{M}^{F,\mathcal{D}}(\varepsilon)$ denotes the execution of the tester $\mathcal{M}$ when given oracle access to the function $F$, the distribution $\mathcal{D}$, and the parameter $\varepsilon$.
\end{definition}

Note that $\mathcal{M}$ itself is an abstract algorithm; $\mathcal{M}^{F,\mathcal{D}}$ is the instantiation of this algorithm with specific oracle access to $F$ and $\mathcal{D}$. 

In many property testing algorithms, the parameter $\varepsilon$ is used only to determine the number of iterations or samples required, not the core logic of the tester. This leads to the concept of \emph{proximity-oblivious testers} (POTs), where the basic testing procedure is independent of $\varepsilon$. The general definition of POTs (given in Appendix~\ref{property-testing-appendix}) also extends naturally to set-valued functions. 

\begin{definition}[Proximity-Oblivious Tester for Set-Valued Functions]\label{def:pot}
A \emph{proximity-oblivious tester} for a property $\mathcal{P}$ in the context of set-valued functions is a probabilistic oracle machine $\mathcal{T}$ that satisfies:
\begin{enumerate}
    \item If $F$ accommodates $\mathcal{P}$, then $\Pr[\mathcal{T}^{F,\mathcal{D}} = \text{Accept}] = 1$
    \item There exists a non-decreasing function $\rho : (0, 1] \to (0, 1]$ (called the \emph{detection probability}) such that if $F$ is $\varepsilon$-Faraway from $\mathcal{P}$,
    \[
    \Pr[\mathcal{T}^{F,\mathcal{D}} = \text{Reject}] \geq \rho( \varepsilon).
    \]
\end{enumerate}
Here, $\mathcal{T}^{F,\mathcal{D}}$ denotes the execution of the tester $\mathcal{T}$ when given oracle access to the function $F$ and the distribution $\mathcal{D}$.
\end{definition}

 To obtain a one-sided error tester with parameter $\varepsilon$, we can make $\Theta\left( \frac{1}{\rho(\varepsilon)} \right)$ independent calls to the POT $\mathcal{T}$ and accept if and only if all the calls accept \citep{pot}. We denote by $\mathcal{T}^{F,\mathcal{D}}(X)$ the output when applied to a specific sample $X \sim \cD$, and note that with abuse of notation we will late consider $\cD$ to be the empirical distribution of calibration dataset $\{(X_i,Y_i)\}_{i=1}^n$ in which case we write $\mathcal{T}^{F,\mathcal{D}}(X_i,Y_i)$ for the output on this specific sample from $\cD$.

\paragraph{Example.}
Consider functions $f: \mathbb{R}^d \to \mathbb{R}$, and let $\mathcal{P}$ denote the property that $f$ is constant in the $k$-th dimension. This property has has connections to fairness among other applications \cite{fairness}. Assume $\cD$ is the empirical distribution of the inputs $X \in \mathbb{R}$ for some fixed dataset.

Restrict to set-valued functions $F$ that output compact and connected intervals of the form $[a,b] \subseteq \mathbb{R}$ for $a,b \in \mathbb{R}$. The candidate POT $\mathcal{T}^{F, \mathcal{D}}$ for whether such a set-valued function $F$ accommodates $\cP$ is then as follows: sample $X, X' \sim \mathcal{D}$,  If $F(X) \cap F(X') \neq \varnothing$, then Accept; otherwise, Reject. We prove that this satisfies Definition ~\ref{def:pot} in Appendix~\ref{pot-proofs-const}.

\subsection{Conformal prediction and conformal risk control}

Our main tool for achieving alignment from this property perspective is built on conformal prediction and conformal risk control \citep{vovk, bates2021distributionfree, angelopoulos2024conformal}. Conformal prediction post-processes the outputs of any  model $f$ to create prediction intervals $C(\cdot)$ that ensure certain statistical coverage guarantees. Using a calibration dataset $\{(X_i, Y_i)\}_{i=1}^n$ consisting of ground truth input-output pairs, conformal prediction constructs intervals around the predictions of $f$ such that $\Pr[Y_{n+1} \notin C(X_{n+1})] \leq \alpha$ for a user-specified error rate $\alpha$ on a test point $(X_{n+1}, Y_{n+1})$.

This guarantee is notably distribution-free and holds for any function $f$. The probability is over the randomness in all $n+1$ points; both the calibration set and the test point. The construction of $C(\cdot)$ depends on both the model $f$ and the draw of the calibration data.

The conformal risk control framework extends conformal prediction to notions of error beyond miscoverage \citep{angelopoulos2024conformal}. Consider a paramater set $\Lambda \subset \mathbb{R}_{\geq 0}$ that is a bounded subset of the nonnegative reals. Given an exchangeable collection of non-increasing, random loss functions $L_i:\Lambda \to (-\infty, B]$, $i=1,\ldots,n+1$, conformal risk control uses the first $n$ loss functions and calibration data $\{(X_i, Y_i)\}_{i=1}^n$ to determine $\hat{\lambda}$ such that
\[
    \mathbb{E}[L_{n+1}(\hat{\lambda})] \leq \alpha.
\]

Consider loss functions of the form $L_i(\lambda) = \ell(C_\lambda(X_i), Y_i)$, where $C_\lambda(X_i)$ is a set of outputs constructed by $f$ and the calibration data. Larger values of $\lambda$ generate more conservative prediction sets $C_\lambda(\cdot)$. Let the risk on the calibration data for a given $\lambda$ be $\hat{R}_n(\lambda) = \frac{1}{n}\sum_{i=1}^n L_i(\lambda)$. For a user-specified risk rate $\alpha$, we let
\[
    \hat{\lambda} = \inf \left\{ \lambda : \frac{n}{n+1} \hat{R}_n(\lambda) + \frac{B}{n+1} \leq \alpha \right\}.
\]

This choice of $\hat{\lambda}$ guarantees the desired risk control $\mathbb{E}[L_{n+1}(\hat{\lambda})] \leq \alpha$ \citep{angelopoulos2024conformal}.

\section{Conformal property alignment}

Our main methodology is to use conformal risk control to create prediction intervals that align with specific properties $\mathcal{P}$. Our approach allows us to post-process the outputs of a pre-trained model $f$ to ensure that within the resulting conformal band, with a given probability, there exists predictions that adhere to desired properties such as monotonicity.

\subsection{Multi-lambda conformal risk control}
We make particular use of the conformal risk control algorithm to allow for a $k$-dimensional vector of tuning parameters $\boldsymbol{\lambda} = (\lambda_1, \lambda_2, \ldots, \lambda_k)$, where larger values of $\boldsymbol{\lambda} \in  \boldsymbol{\Lambda} \subset \mathbb{R}^k$ yield more conservative outputs, where $\boldsymbol{\Lambda} \subset \mathbb{R}_{\geq 0}^k$ is a bounded subset of $\mathbb{R}_{\geq 0}^k$. This works by  mapping $\bl$ to a scalar and then applying standard conformal risk control.  We emphasize that this result is not new and follows essentially directly from \cite{angelopoulos2024conformal}. The construction of the output set $F_{\boldsymbol{\lambda}}(X) \subseteq \cY$ depends on the specific application and provides flexibility in how the function $f(X)$ and the parameters $\boldsymbol{\lambda}$ are utilized.

\begin{definition}[Construction of $F_{\boldsymbol{\lambda}}(X)$]
Let $f: \mathcal{X} \to \mathcal{Y}$ be a given function. For each $\boldsymbol{\lambda} \in \mathbb{R}^k$, define the set-valued function $F_{\boldsymbol{\lambda}}: \mathcal{X} \to 2^{\mathcal{Y}}$ such that, for each $X \in \mathcal{X}$, $F_{\boldsymbol{\lambda}}(X)$ is a set of predictions for $X$ constructed from $f$ and $\boldsymbol{\lambda}$. The specific construction of $F_{\boldsymbol{\lambda}}(X)$ should satisfy the following properties:

\begin{enumerate}
    \item  When $\boldsymbol{\lambda} = \boldsymbol{0}$, we have $F_{\boldsymbol{0}}(X) = \{ f(X) \}$.
    \item For any $\boldsymbol{\lambda}, \boldsymbol{\lambda}' \in \mathbb{R}^k$, if $\boldsymbol{\lambda} \leq \boldsymbol{\lambda}'$ (i.e., $\lambda_i \leq \lambda_i'$ $\forall i = 1, 2, \ldots, k$), then $F_{\boldsymbol{\lambda}}(X) \subseteq F_{\boldsymbol{\lambda}'}(X)$.
\end{enumerate}
\end{definition}

This definition ensures that increasing the parameters $\boldsymbol{\lambda}$ leads to larger (more conservative) prediction sets, and that when all parameters are zero, the prediction set reduces to the point prediction given by $f(X)$.

Following the original scalar $\lambda$ setting, we assess $F_{\bl}$ using non-increasing random loss functions $L_i = \ell(F_{\bl}(X_i),Y_i) \in (-\infty, B]$ for $B < \infty$. In particular, we consider an exchangeable collection of non-increasing random functions $L_i: \boldsymbol{\Lambda} \to (-\infty, B], i=1,...,n+1$, where $\boldsymbol{\Lambda} \subset \mathbb{R}_{\geq 0}^k$ is a bounded subset of $\mathbb{R}_{\geq 0}^k$, with bound $\lambda^{\max}_j$ in each dimension $j \in [k]$.

As in \citet{angelopoulos2024conformal}, we use the first $n$ functions to determine $\bhl$ so that the risk on the $(n+1)$-th function is controlled, specifically so that $\E[L_{n+1}(\bhl)] \leq \alpha$.

We apply a similar algorithm. Given $\alpha \in (\infty, B)$ and letting $\hat{R}_n(\bl) = \frac{L_1(\bl) + \cdots + L_n(\bl)}{n}$, define

\[
\boldsymbol{\Lambda}_{\min} = \text{min} \left\{ \bl : \frac{n}{n+1} \hat{R}_n(\bl) + \frac{B}{n+1} \leq \alpha \right\}
\]

to be the set of minimal elements \citep{boyd2004convex} of $\boldsymbol{\Lambda}$ that satisfy the condition $\frac{n}{n+1} \hat{R}_n(\bl) + \frac{B}{n+1} \leq \alpha$. Let $g: \boldsymbol{\Lambda} \to \mathbb{R}$ be a strictly increasing function such that $L_i(\bl)$ is non-increasing with respect to the level sets defined by $g(\bl)$. Then select $\bhl \in \boldsymbol{\Lambda}_{\min}$ to be a minimizer of $g$ over $\boldsymbol{\Lambda}_{\min}$.

We then deploy the resulting set-valued function $F_{\bhl}$ on the test point $X_{n+1}$. For this choice of $\bhl$, we have a risk control guarantee that mimics the result of \citet{angelopoulos2024conformal}, specifically:

\begin{prop}\label{prop:1}
Assume that $L_i(\bl)$ is non-increasing with respect to the partial ordering of $\boldsymbol{\Lambda}$ inherited from $\mathbb{R}^k$. Additionally, assume that $L_i(\bl)$ is non-increasing with respect to $g(\bl)$ for some strictly increasing function $g: \boldsymbol{\Lambda} \to \mathbb{R}$. Also assume $L_i$ is right-continuous in each dimension, $L_i(\bl^{\max}) \leq \alpha$, and $\sup_{\bl} L_i(\bl) \leq B < \infty$ almost surely. Then 

\[
\E[L_{n+1}(\bhl)] \leq \alpha.
\]
\end{prop}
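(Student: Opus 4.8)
The plan is to reduce the multi-dimensional claim to the scalar conformal risk control result of \citet{angelopoulos2024conformal} by pushing everything through the strictly increasing map $g$. First I would define, for each $i$, the scalar loss $\tilde{L}_i(t) = L_i(\bl)$ evaluated along the level set $\{\bl : g(\bl) = t\}$; the hypothesis that $L_i$ is non-increasing with respect to $g(\bl)$ is exactly what makes this well-defined as a non-increasing function of the scalar $t \in g(\boldsymbol{\Lambda})$. One must check that $\tilde{L}_i$ inherits the structural properties needed to invoke the scalar theorem: the range bound $\tilde{L}_i \le B < \infty$ a.s.\ is immediate; exchangeability of $(\tilde{L}_1,\dots,\tilde{L}_{n+1})$ follows from exchangeability of $(L_1,\dots,L_{n+1})$ since each $\tilde{L}_i$ is the same deterministic functional of $L_i$; right-continuity of $\tilde{L}_i$ in $t$ needs to be derived from right-continuity of $L_i$ in each coordinate together with the fact that $g$ is strictly increasing (so moving up in $t$ corresponds to moving up in the partial order, at least locally); and the boundary condition $\tilde{L}_i(t_{\max}) \le \alpha$ where $t_{\max} = g(\bl^{\max})$ follows from $L_i(\bl^{\max}) \le \alpha$.

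Next I would argue that the selection rule for $\bhl$ coincides, under $g$, with the scalar selection rule applied to $\tilde{R}_n(t) = \frac{1}{n}\sum_{i=1}^n \tilde{L}_i(t)$. Concretely, I claim $g(\bhl) = \hat{t} := \inf\{t : \frac{n}{n+1}\tilde{R}_n(t) + \frac{B}{n+1} \le \alpha\}$. The ``$\ge$'' direction: any $\bl \in \boldsymbol{\Lambda}_{\min}$ satisfies the calibration inequality, hence $g(\bl) \ge \hat t$ by definition of the infimum, and in particular $g(\bhl) \ge \hat t$. The ``$\le$'' direction: because $\tilde{L}_i$ is non-increasing in $t$, the calibration constraint region $\{t : \frac{n}{n+1}\tilde{R}_n(t) + \frac{B}{n+1} \le \alpha\}$ is an up-set in $t$, so its infimum $\hat t$ is attained (using right-continuity) or approached, and I can exhibit a point $\bl^\star \in \boldsymbol{\Lambda}$ with $g(\bl^\star) = \hat t$ that satisfies the constraint; by minimality this $\bl^\star$ dominates some element of $\boldsymbol{\Lambda}_{\min}$ in the partial order, and since $\bhl$ minimizes $g$ over $\boldsymbol{\Lambda}_{\min}$ we get $g(\bhl) \le g(\bl^\star) = \hat t$. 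Then $L_{n+1}(\bhl) = \tilde{L}_{n+1}(g(\bhl)) = \tilde{L}_{n+1}(\hat t)$, and applying the scalar conformal risk control theorem to the family $\{\tilde{L}_i\}$ with its own threshold $\hat t$ yields $\E[\tilde{L}_{n+1}(\hat t)] \le \alpha$, which is the claim.

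The main obstacle is the bookkeeping around the selection step: the level sets of $g$ need not be singletons, so $\boldsymbol{\Lambda}_{\min}$ can contain several incomparable points, and I must be careful that (a) the scalar reduction does not depend on which representative of a level set is chosen — this is guaranteed by the ``$L_i$ non-increasing w.r.t.\ $g(\bl)$'' hypothesis, which forces $L_i$ to be constant on level sets of $g$ — and (b) the equality $g(\bhl) = \hat t$ genuinely holds rather than just $g(\bhl) \ge \hat t$, which requires the minimality/domination argument above plus right-continuity to ensure the scalar infimum is achieved on $\boldsymbol{\Lambda}$. A secondary technical point is verifying right-continuity of $\tilde{L}_i$: approaching $\hat t$ from above in scalar space corresponds to approaching a minimal point from a direction that increases every coordinate, so one invokes coordinatewise right-continuity of $L_i$ along such a sequence. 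Once these points are settled, the rest is a direct citation of \citet{angelopoulos2024conformal}.
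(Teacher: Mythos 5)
Your proposal is correct in outline but takes a genuinely different route from the paper. The paper does not reparametrize: it re-runs the oracle/exchangeability argument of \citet{angelopoulos2024conformal} directly in the multi-dimensional setting, defining an oracle $\bhl'$ as a $g$-minimizer over the minimal elements of $\{\bl : \hat{R}_{n+1}(\bl)\le\alpha\}$ (the risk computed with all $n+1$ losses), showing $g(\bhl')\le g(\bhl)$ almost surely, invoking monotonicity of $L_{n+1}$ in $g$ to get $\E[L_{n+1}(\bhl)]\le\E[L_{n+1}(\bhl')]$, and finishing by conditioning on the multiset of loss functions. You instead observe that the hypothesis ``$L_i$ non-increasing with respect to $g(\bl)$'' forces each $L_i$ to be constant on level sets of $g$, so the whole problem factors through the scalar variable $t=g(\bl)$, and you then cite the scalar theorem as a black box. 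Your route makes the role of the $g$-hypothesis transparent (the problem is secretly one-dimensional), at the cost of the bookkeeping you identify; the paper's route duplicates the original proof but avoids having to match the two selection rules. Two remarks on your bookkeeping: (i) you do not need the exact equality $g(\bhl)=\hat t$ — the direction $g(\bhl)\ge\hat t$, which you establish cleanly, already suffices, since monotonicity of $\tilde L_{n+1}$ then gives $\tilde L_{n+1}(g(\bhl))\le \tilde L_{n+1}(\hat t)$ pointwise, and the harder ``$\le$'' direction (which leans on attainment of minimal elements and of the scalar infimum) can be dropped; (ii) the genuinely delicate step is deducing right-continuity of $\tilde L_i$ in $t$ from coordinatewise right-continuity of $L_i$ — this works by perturbing each coordinate in turn to produce $\bl''>\bl$ with $L_i(\bl'')>L_i(\bl)-\epsilon$ and then using that $L_i$ factors through $g$, but it implicitly requires such perturbed points to lie in $\boldsymbol{\Lambda}$, a regularity issue the paper's own proof also leaves implicit.
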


The proof is similar to the proof of the guarantee for the conformal risk control algorithm in \citet{angelopoulos2024conformal} and is deferred to Appendix \ref{multi-lambda}. 

To provide intuition on $g(\bl)$, we note that for our primary use case we will take $g(\bl)=\sum_{i=1}^k \lambda_i$. Clearly this function is strictly increasing in $\bl$ and intuitively it is reasonable to consider loss functions $L_i$ that are non-increasing as the sum of the components of $\bl$ increases.

\subsection{Conformal property alignment from proximity oblivious testers}\label{sec:mainthm}

We now demonstrate how to construct a conformal risk control problem using proximity-oblivious testers (POTs) for a given property $\mathcal{P}$. Suppose we are given a pre-trained model $f: \mathcal{X} \to \mathcal{Y}$. We aim to extend the point predictions of $f$ to prediction sets, where the size or conservativeness of the set is parameterized by a parameter $\bl$. Let $F_{\bl}: \mathcal{X} \to 2^{\mathcal{Y}}$ denote the set-valued function that outputs, for each $X \in \mathcal{X}$, the set $F_{\bl}(X) \subseteq \mathcal{Y}$ determined by $f$, $X$, and $\bl$.

Let $\mathcal{T}^{F,\mathcal{D}}$ be a proximity-oblivious tester for whether a set-valued function $F$ accommodates the property $\cP$ as given y Definition~\ref{def:pot}. We denote the random output of $\mathcal{T}^{F,\mathcal{D}}$ evaluated at $(X, Y) \sim \mathcal{D}$ by $\mathcal{T}^{F, \mathcal{D}}(X, Y)$.

We now define a loss function, generated from $\mathcal{T}^{F,\mathcal{D}}$, which will be crucial in formulating our conformal risk control problem.

\begin{definition}[Loss Function Generated from a POT]\label{def:loss_function}
Let $\mathcal{T}^{F, \mathcal{D}}$ be a proximity-oblivious tester for a property $\mathcal{P}$. We define the loss function $L_i$ as:
\[
L_i = 
\begin{cases}
0, & \text{if } \mathcal{T}^{F, \mathcal{D}}(X_i, Y_i) = \text{Accept}, \\
1, & \text{otherwise},
\end{cases}
\]
where $(X_i, Y_i)$ are samples from the distribution $\mathcal{D}$.
\end{definition}

\paragraph{Example.} Consider the POT for the property $\mathcal{P}$ of a function $f: \mathbb{R} \to \mathbb{R}$ being constant, as mentioned in Section~\ref{property-testing}. Assume we have access to a calibration set $\{(X_i, Y_i)\}_{i=1}^n$ of size $n$. We use a two-dimensional parameter $\bl = (\lambda^{-}, \lambda^{+})$, and define the set-valued function:
\[
F_{\bl}(X) = [f(X) - \lambda^{-},\ f(X) + \lambda^{+}],
\]
for each $X \in \mathbb{R}$. This creates prediction intervals around the point prediction $f(X)$, with widths controlled by $\lambda^{-}$ and $\lambda^{+}$.

We then apply the loss function generated by $\mathcal{T}^{F_{\lambda}, \mathcal{D}}$ as given in Definition~\ref{def:loss_function}, and use conformal risk control to tune $\bl$ such that the expected loss on the $(n+1)$th point falls below a given target level $\alpha$.

Note that in this case the tester and loss function does not depend on the $Y_i$. This is because the property of $f$ being constant does not depend on the $Y_i$ from the calibration set and here $\cD$ is only used to obtain samples of the $X_i$. This is not the case in general, however, and properties can be defined with respect to the whole sample $(X_i,Y_i) \sim \cD$. For example, we could consider the property $\cP$ that $f$ does not over-predict, that is, for $(X,Y) \sim \cD$ we have $f(X) \leq Y$. Now we state our main theorem.

\begin{theorem} \label{thm:main}
Let \( \mathcal{T} \) be a proximity-oblivious tester for a property \( \mathcal{P} \) with detection probability function \( \rho(\cdot) \). Assume access to a calibration dataset \(\{(X_i, Y_i)\}_{i=1}^n\) sampled independently from a distribution \( \mathcal{D} \). Suppose we run conformal risk control on this calibration dataset using risk parameter \( \alpha \) and loss functions \( L_i \) for property $\cP$ generated from $\cT$ (as in Definition~\ref{def:loss_function}). Then, for any \( \varepsilon \) such that \( \rho(\varepsilon) > \alpha \), the probability that \( F_{\hat{\boldsymbol{\lambda}}} \) is \( \varepsilon \)-Faraway from \( \mathcal{P} \) satisfies:
\[
\Pr_{(X_1,Y_1),...,(X_n,Y_n)}\left( F_{\hat{\boldsymbol{\lambda}}} \text{ is } \varepsilon\text{-Faraway from } \mathcal{P} \right) \leq \frac{\alpha}{\rho(\varepsilon)}.
\]
\end{theorem}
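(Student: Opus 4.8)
The plan is to connect the conformal risk control guarantee (expected loss bound $\E[L_{n+1}(\bhl)] \leq \alpha$) with the POT's detection probability. The key observation is that the loss function $L_i$ from Definition~\ref{def:loss_function} is precisely the indicator that the tester $\cT$ rejects on sample $(X_i, Y_i)$. So $\E[L_{n+1}(\bhl)]$ is the probability, over both the calibration draw and the fresh test point $(X_{n+1}, Y_{n+1})$, that the POT (with input $F_{\bhl}$) rejects. The strategy is: (i) apply Proposition~\ref{prop:1} to get $\E[L_{n+1}(\bhl)] \leq \alpha$; (ii) lower-bound this expectation by the probability that $F_{\bhl}$ is $\varepsilon$-Faraway times the conditional rejection probability of the POT on such an $F$, which is at least $\rho(\varepsilon)$; (iii) rearrange.

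First I would verify the hypotheses of Proposition~\ref{prop:1} hold for the POT-generated loss. Monotonicity of $L_i$ in $\bl$ (and in $g(\bl)$) follows from the monotonicity property $F_{\bl} \subseteq F_{\bl'}$ for $\bl \leq \bl'$ in the construction of $F_{\bl}$: a one-sided-error POT only rejects when it detects a "local violation," and enlarging the prediction sets can only remove violations, hence can only turn a Reject into an Accept, making $L_i$ non-increasing. Right-continuity and the boundedness ($B = 1$, and $L_i(\bl^{\max}) \leq \alpha$) I would either assume as part of the construction or note that they must be checked per application; at worst these are stated as standing assumptions. Given these, Proposition~\ref{prop:1} yields $\E[L_{n+1}(\bhl)] \leq \alpha$, where the expectation is over all $n+1$ samples and the internal randomness of the tester.

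Next I would write, conditioning on the calibration data (which determines $\bhl$ and hence $F_{\bhl}$),
\[
\E[L_{n+1}(\bhl)] = \E_{\text{cal}}\!\left[\Pr_{(X_{n+1},Y_{n+1}),\,\cT}\!\left(\cT^{F_{\bhl},\cD}(X_{n+1},Y_{n+1}) = \text{Reject}\right)\right].
\]
On the event that $F_{\bhl}$ is $\varepsilon$-Faraway from $\cP$, the inner probability is exactly $\Pr[\cT^{F_{\bhl},\cD} = \text{Reject}] \geq \rho(\varepsilon)$ by the defining property (2) of a POT (Definition~\ref{def:pot}); on the complementary event it is $\geq 0$. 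Dropping the second term,
\[
\alpha \geq \E[L_{n+1}(\bhl)] \geq \rho(\varepsilon)\cdot \Pr_{\text{cal}}\!\left(F_{\bhl} \text{ is } \varepsilon\text{-Faraway from } \cP\right),
\]
and since $\rho(\varepsilon) > 0$ (indeed $> \alpha$) we divide through to obtain the claimed bound $\alpha/\rho(\varepsilon)$.

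The main obstacle I anticipate is a subtle measurability/exchangeability point: the calibration samples and the test point must be exchangeable for Proposition~\ref{prop:1} to apply, but the theorem statement says the $(X_i, Y_i)$ are i.i.d., so this is fine; more delicate is ensuring the tester's internal randomness is handled consistently — i.e., that $L_{n+1}(\bl)$ is a well-defined random function jointly with $L_1, \ldots, L_n$ and that the collection remains exchangeable after fixing $\bhl$ from the first $n$. This is the kind of detail that is routine in conformal risk control but needs a line of care. A secondary subtlety: $\delta_{\cP,\cD}(F)$ in the definition of $\varepsilon$-Faraway uses $\cD$ as the (random) empirical calibration distribution, so "$F_{\bhl}$ is $\varepsilon$-Faraway" is itself an event depending on the calibration draw — which is exactly the event whose probability we bound, so the logic is self-consistent, but I would make sure the POT's guarantee (2) is invoked with the same $\cD$.
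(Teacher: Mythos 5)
Your proposal is correct and follows essentially the same route as the paper's proof: both lower-bound $\E[L_{n+1}(\bhl)]$ by $\Pr(\mathcal{E})\cdot\rho(\varepsilon)$ via the POT's detection guarantee on the $\varepsilon$-Faraway event, drop the non-negative complementary term, and combine with the conformal risk control bound $\E[L_{n+1}(\bhl)]\leq\alpha$. Your additional remarks on verifying the monotonicity/right-continuity hypotheses of Proposition~\ref{prop:1} and on the empirical-$\cD$ self-consistency are sensible points of care that the paper leaves implicit, but they do not change the argument.
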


\begin{proof}
Let \( \mathcal{E} \) denote the event that \( F_{\hat{\boldsymbol{\lambda}}} \) is \( \varepsilon \)-Faraway from the property \( \mathcal{P} \). Our goal is to bound the probability \( \Pr_{(X_1,Y_1),...,(X_n,Y_n)}[\mathcal{E}] \).

The conformal risk control procedure ensures that the expected loss on a new sample \( (X_{n+1}, Y_{n+1}) \) satisfies:
\[
\mathbb{E}_{(X_1,Y_1),...,(X_n,Y_n), (X_{n+1}, Y_{n+1})}[ L_{n+1} ] \leq \alpha.
\]
Now, we can write
\begin{align*}
\mathbb{E}[ L_{n+1} ] &= \Pr( \mathcal{E} ) \cdot \mathbb{E}[ L_{n+1} \mid \mathcal{E} ] + \Pr( \mathcal{E}^c ) \cdot \mathbb{E}[ L_{n+1} \mid \mathcal{E}^c ].
\end{align*}

When \( \mathcal{E} \) occurs, \( F_{\hat{\boldsymbol{\lambda}}} \) is \( \varepsilon \)-Faraway from \( \mathcal{P} \). By the properties of the proximity-oblivious tester \( \mathcal{T} \), we have:
\[
\Pr_{ (X, Y) \sim \mathcal{D} }\left[ \mathcal{T}^{ F_{\hat{\boldsymbol{\lambda}}}, \mathcal{D} }( X, Y ) = \text{Reject} \mid \mathcal{E} \right] \geq \rho( \varepsilon ).
\]
Thus, the conditional expected loss satisfies:
\[
\mathbb{E}[ L_{n+1} \mid \mathcal{E} ] = \Pr\left[ \mathcal{T}^{ F_{\hat{\boldsymbol{\lambda}}}, \mathcal{D} }( X_{n+1}, Y_{n+1} ) = \text{Reject} \mid \mathcal{E} \right] \geq \rho( \varepsilon ).
\]

And when since $\Pr( \mathcal{E}^c ) \cdot \mathbb{E}[ L_{n+1} \mid \mathcal{E}^c ]$ is non-negative because $L_{n+1}$ is non-negative, we obtain
\begin{align*}
\mathbb{E}[ L_{n+1} ] &\geq \Pr( \mathcal{E} ) \cdot \rho( \varepsilon ).
\end{align*}

Combining this result with our guarantee from the conformal risk control procedure, 
\[
\alpha \geq \mathbb{E}[ L_{n+1} ] \geq \Pr( \mathcal{E} ) \cdot \rho( \varepsilon ).
\]
This implies:
\[
\Pr( \mathcal{E} ) \leq \frac{ \alpha }{ \rho( \varepsilon ) }.
\]

Therefore, the probability that \( F_{\hat{\boldsymbol{\lambda}}} \) is \( \varepsilon \)-Faraway from \( \mathcal{P} \) satisfies:
\[
\Pr_{(X_1,Y_1),...,(X_n,Y_n)}\left( F_{\hat{\boldsymbol{\lambda}}} \text{ is } \varepsilon\text{-Faraway from } \mathcal{P} \right) \leq \frac{ \alpha }{ \rho( \varepsilon ) }.
\]
\end{proof}

\paragraph{Amplifying Detection Probability via Independent Calls.}
\label{sec:amplify_detection_probability}
When the detection probability $\rho(\varepsilon)$ of the proximity-oblivious tester $\mathcal{T}$ is less than or close to the risk parameter $\alpha$, the bound provided by Theorem~\ref{thm:main} may not be tight or meaningful (since $\alpha / \rho(\varepsilon)$ could be greater than or equal to 1). To address this issue, we can amplify the detection probability by performing multiple independent executions of $\mathcal{T}$ and combining their results appropriately.

To increase the detection probability beyond $\alpha$, we execute the proximity-oblivious tester $\mathcal{T}$ independently $k$ times on independent samples and define a new tester $\mathcal{T}'$ that rejects if \emph{any} of the $k$ executions reject (i.e., by applying a logical OR to the outcomes). This amplification technique yields an adjusted detection probability 
\[
\rho'(\varepsilon) = 1 - (1 - \rho(\varepsilon))^k,
\]
representing the probability that at least one of the $k$ independent executions rejects when the function is $\varepsilon$-Faraway from $\mathcal{P}$.

In this approach, the calibration dataset needs to be partitioned into $n' = \left\lfloor \frac{n}{k} \right\rfloor$ disjoint batches, each containing $k$ samples. Each batch provides the independent samples required for the $k$ executions of $\mathcal{T}$ per calibration point. As a result, the effective sample size available for calibrartion becomes $n'$ due to this batching of samples.

\section{Examples}

\subsection{Monotonicity}\label{sec:monotonicity}

Monotonic behavior is important in various applications. We focus on monotonicity in a single feature, where we expect that $f(X)$ should have monotonically increasing or decreasing behavior with respect to a certain feature $x^k$ when other features $x^{-k}$ are held fixed.  While there is a long-standing literature on using monotonic constraints for regularization \citep{Brunk1973StatisticalIU, sill, you2017deep, bonakdarpour2018prediction} and on integrating such monotonic shape constraints into prediction models \citep{groeneboom_nonparametric_2014, cano2018monotonic, runje2023constrained}, our aim is not to view monotonicity as a possible means to improve test accuracy, but rather as a user-desired property for safe or fair deployment of a given model. For example, \citet{wang2020deontological} highlight the importance of monotonicity in models for criminal sentencing, wages, and medical triage.

Consider a user given a pre-trained model $f$ that was not trained with monotonic constraints. The user, however, wishes for the sake of safe or fair deployment to make predictions in a way that is as monotonic as possible. 
In particular, let $\cP$ be the property that $f$ is monotonically decreasing in dimension $k$. To apply our methodology we consider the proximity oblivious tester $\cT$ for $\cP$ as given in Algorithm \ref{alg:POT_monotonicity}.

\begin{algorithm}
\caption{POT $\cT$ for property $\cP$ of monotonically decreasing in dimension $k$}
\label{alg:POT_monotonicity}
\begin{algorithmic}[1]
\STATE Sample $X_1 \sim \mathcal{D}$. Let $X_1=(x_1, x^{-k})$
\STATE Sample $x_2$ from the marginal distribution of $\cD$ in dimension $k$. Set $X_2=(x_2, x^{-k})$
\IF{$x_1 < x_2$ and $\max{F(X_1)} < \min F(X_2)$}
    \RETURN Reject
\ELSIF{$x_2 < x_1$ and $\max F(X_2) < \min{F(X_1)}$}
    \RETURN Reject
\ENDIF
\RETURN Accept
\end{algorithmic}
\end{algorithm}

 We prove in Appendix~\ref{pot-proofs-mon} that Algorithm \ref{alg:POT_monotonicity} is indeed a POT for the property $\cP$ of being monotonically decreasing in a given dimension. Then let $\cM$ be the one-sided error tester for $\cP$ resulting from $\Theta(1/\rho(\varepsilon))$ calls to $\cT$. Now assume we have access to a calibration dataset $\{(X_i, Y_i)\}_{i=1}^n$ sampled from $\cD$ of size $n \in \Omega(1/\rho(\varepsilon))$. We will use this calibration dataset to determine the setting of $\bl = (\lambda^+, \lambda^-)$ via conformal risk control where the loss function is generated as in Definition~\ref{def:loss_function}. Here the set-valued function will be constructed as $F_{\bl}(X) = [f(X) - \lambda^-, f(X) + \lambda^+]$. Then by Theorem~\ref{thm:main} if the tester has sufficient detection probability $\rho(\varepsilon) > \alpha$ we expect to obtain a set-valued function $F_{\bhl}$ at most $\varepsilon$ from $\cP$. We now investigate this empirically.

\paragraph{Setup.} We align for monotonicity on various UCI ML repository datasets \citep{ucimlrepo} with a 70-15-15 train-calibrate-test split, averaged over 30 random splits. We use XGBoost regression models \citep{Chen_2016}. For each dataset, we select a feature for which we desire the model to be monotonic, not with the intention of improving test-set accuracy, but from the perspective of a user who desires this property.

We train two models per dataset: one unconstrained, trained on the training set, and another constrained to be monotonic, trained on both the training and calibration sets. The conformal risk control procedure is applied to the unconstrained model using the calibration data. The constrained model can be considered best possible from the user's perspective, using all available pre-test data and satisfying the monotonicity property $\cP$ during training.

To compare performance with respect to the training metric of accuracy, we convert conformal intervals into point predictions by taking $k$-quantiles of the constrained feature, linearly interpolating between adding $\lambda^+$ at the lowest quantile to subtracting $\lambda^-$ at the highest quantile for monotonically decreasing, or vice versa for monotonically increasing.

\paragraph{Results.} Table \ref{table:power} presents results on the test set for the Combined Cycle Power Plant dataset \citep{power_plant}. In practice, Exhaust-vacuum is known to negatively influence turbine efficiency \citep{power_plant}. The conformal procedure outperforms the constrained model in terms of MSE for all $\alpha$, which is a fortuitous but unexpected outcome. The constrained model should be seen as an oracle benchmark in the sense that the model was given to the user already trained to satisfy the desired property. The results on this dataset and  The risk metric closely matches the theoretical guarantee from conformal risk control and achieves optimal performance of 0 for the constrained model. Additional datasets and results are detailed in the appendix.

\begin{table}[h!]\label{table:power}
  \centering
  \caption{Power Plant, $n=9568$. Monotonically decreasing on Exhaust Vacuum. $\bl^{\max}=(10,10)$.}
  \begin{tabular}{c@{\hspace{12pt}}l@{\hspace{12pt}}c@{\hspace{12pt}}c@{\hspace{12pt}}c@{\hspace{12pt}}c}
    \toprule
    $\alpha$ & $\bl$ & Metric & Unconstrained & Adjusted & Constrained \\
    \midrule
    & $\lambda^+ = 0.51_{\scriptsize{(\pm 0.24)}}$ & MSE & $10.19_{\scriptsize{(\pm 0.46)}}$ & $10.47_{\scriptsize{(\pm 0.46)}}$ & $16.21_{\scriptsize{(\pm 0.45)}}$ \\
    0.1 & $\lambda^- = 0.76_{\scriptsize{(\pm 0.24)}}$ & Risk & $0.75_{\scriptsize{(\pm 0.09)}}$ & $0.10_{\scriptsize{(\pm 0.001)}}$ & $0.00_{\scriptsize{(\pm 0.00)}}$ \\
    \midrule
    & $\lambda^+ = 1.09_{\scriptsize{(\pm 0.51)}}$ & MSE & $10.19_{\scriptsize{(\pm 0.46)}}$ & $11.42_{\scriptsize{(\pm 0.44)}}$ & $16.21_{\scriptsize{(\pm 0.45)}}$ \\
    0.05 & $\lambda^- = 1.61_{\scriptsize{(\pm 0.50)}}$ & Risk & $0.75_{\scriptsize{(\pm 0.09)}}$ & $0.05_{\scriptsize{(\pm 0.001)}}$ & $0.00_{\scriptsize{(\pm 0.00)}}$ \\
    \midrule
    & $\lambda^+ = 2.39_{\scriptsize{(\pm 0.82)}}$ & MSE & $10.19_{\scriptsize{(\pm 0.46)}}$ & $14.46_{\scriptsize{(\pm 0.48)}}$ & $16.21_{\scriptsize{(\pm 0.45)}}$ \\
    0.01 & $\lambda^- = 3.33_{\scriptsize{(\pm 0.79)}}$ & Risk & $0.75_{\scriptsize{(\pm 0.09)}}$ & $0.01_{\scriptsize{(\pm 0.001)}}$ & $0.00_{\scriptsize{(\pm 0.00)}}$ \\
    \bottomrule
  \end{tabular}
\end{table}

\begin{figure}[ht]
    \centering
    \includegraphics[width=\linewidth]{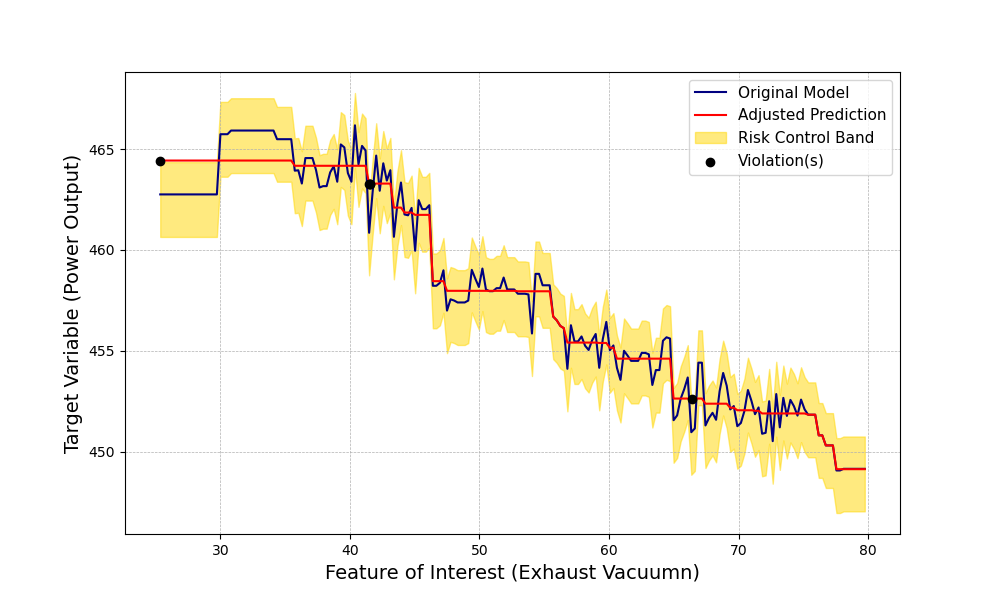}
    \caption{Univariate partial dependence plot of unconstrained model. Risk control band for $\alpha=0.05$. Dashed line exemplifying Theorem 1 demonstrating existence of monotonically decreasing function falling within the conformal band on $0.975 > 1-\alpha$ fraction of the domain. }
    \label{fig:powdashed}
\end{figure}

\subsection{Concavity}

Concavity and convexity are crucial behaviors in many applications. In this context, we focus on concavity in a single feature. A common example where users might expect concave behavior is in recommendation or preference prediction models. According to economic theory, the utility function with respect to the quantity of an item is often quasi-concave, reflecting the principle of diminishing marginal utility \citep{utility}. \citet{neural_utility} propose a novel loss function to account for this expected concavity, which aligns the model with the concavity property $\cP$ during training. Here we again consider aligning a pre-trained model, not trained to satisfy $\cP$, using a proximity oblivious tester $\cT$ for $\cP$ as described in Algorithm \ref{alg:POT_concavity}.

\begin{algorithm}[H]
\caption{POT \(\mathcal{T}\) for property \(\mathcal{P}\) of concavity in dimension \( k \)}
\label{alg:POT_concavity}
\begin{algorithmic}[1]
\STATE Sample \( X_{\text{mid}} \sim \mathcal{D} \)
\STATE Sample \( \delta_{\text{left}}, \delta_{\text{right}} \) from empirical differences in feature \( k \)
\STATE Set \( X_{\text{left}} \) by decreasing feature \( k \) of \( X_{\text{mid}} \) by \( \delta_{\text{left}} \)
\STATE Set \( X_{\text{right}} \) by increasing feature \( k \) of \( X_{\text{mid}} \) by \( \delta_{\text{right}} \)
\STATE Query \( F(X_{\text{mid}}) \), \( F(X_{\text{left}}) \), and \( F(X_{\text{right}}) \)
\STATE Compute \( \alpha = \dfrac{X_{\text{right}}[k] - X_{\text{mid}}[k]}{X_{\text{right}}[k] - X_{\text{left}}[k]} \)
\IF{ \( \min F(X_{\text{mid}}) > \alpha \max F(X_{\text{left}}) + (1 - \alpha) \max F(X_{\text{right}}) \) }
    \RETURN \textbf{Reject}
\ENDIF
\RETURN \textbf{Accept}
\end{algorithmic}
\end{algorithm}

Again we can use a calibration dataset to determine the setting of $\bl = (\lambda^+, \lambda^-)$ via conformal risk control where the loss function is generated as in Definition~\ref{def:loss_function}. Here again the set-valued function will be constructed as $F_{\bl}(X) = [f(X) - \lambda^-, f(X) + \lambda^+]$.  We demonstrate running conformal risk control with this loss function on a real-world dataset in Appendix~\ref{concavity-empirics}.

\newpage

\section{A stylized examination of alignment persistence in AI models}

Consider data generated as:
\[
y=g(X) + h(X) + \noise\,,
\]
where $\noise$ is mean-zero noise with variance $\tau^2$ independent of $X$. Here, $h(X)$ is biased noise we want to ignore, aiming to learn only $g(X)$. Consider the case in which experts expect data to follow $g(X)+\textit{unbiased noise}$, but biased noise $h(X)$ can obscure this. 

One potential reason for the presence of biased noise in data could be due to a measurement error of the outcome that is correlated with select features, leading to an incorrectly calculated outcome. A biased measurement error could occur if there is incomplete data and the presence of the incomplete data is correlated with select features in an unexpected, systematic way. Our goal is to understand how this bias affects model behavior when trying to learn $g(X)$ alone.

Given $n$ i.i.d. samples $\{(X_i, Y_i)\}_{i=1}^n$ from the above model, we denote this dataset by $\cD_n$. We use a random feature model:
\begin{equation*}
f_{\rf}(X; \va,\{\vw_j\}_{j\in[N]}) = \frac{1}{\sqrt{N}}\sum_{j\in[N]}a_j\af(\langle X,\vw_j\rangle)\,,
\end{equation*}
where $\va\in\R^N$ are learned weights, and $\{\vw_j\}_{j\in[N]}$ are fixed random weights. The squared loss is minimized by ridge regression:
\begin{equation*}\label{eq:unconstrained-learning}
    \hat{\va}_\lambda = \arg\min_{\va\in\R^N} \sum_{i\in[n]} \left(Y_i-f_{\rf}(X_i)\right)^2 + \lambda\|\va\|_2^2\,.
\end{equation*}
Users expect a model to exhibit a property $\cP$, satisfied by $g(X)$ but not necessarily by $g(X)+h(X)$. We can constrain training to ensure $\cP$. Let
\(
C_{\cP} = \{\va~|~\va\in\R^N~\text{and}~f_{\rf}(X;\va)~\text{satisfies}~\cP\}\,,
\)
yielding a constrained model:
$\hat{\va}_{\lambda,\cP} = \arg\min_{\va\in C_{\cP}} \sum_{i\in[n]} \left(Y_i-f_{\rf}(X_i)\right)^2 + \lambda\|\va\|_2^2$.

Assuming $g$ and $h$ are polynomials with $\deg_g<\deg_h$, and given specific conditions on data size and model parameters, we consider two settings: (i) \textit{Classic:} $d^{\deg_g+\delta}<N<d^{\deg_h-\delta}$, and (ii) \textit{Underspecified:} $N>d^{\deg_h+\delta}$ for a small $\delta>0$. 

In Appendix \ref{sec:rf-theory-dtails}, we utilize results from \citet{misiakiewicz2023six} to derive insights into the impact of model complexity and data size on adherence to $\cP$. In particular, we show that under certain assumptions, including small noise bias and robustness of property $\cP$, the constrained and unconstrained models have zero distance in the classic setting: $\hat{\va}_{\lambda,\cP} = \hat{\va}_{\lambda}$. However, in the underspecified setting, the constrained and unconstrained models will differ, resulting in a non-zero distance: $\hat{\va}_{\lambda,\cP} \neq \hat{\va}_{\lambda}$. This result implies that in the presence of noise bias, the overparameterized models (i.e., underspecified setting) fail to satisfy the property $\cP$, and this cannot be remedied as the data size increases.

\section{Related work}\label{sec:related}

Our paper draws from a broad range of areas, hence we refer the reader to textbooks and surveys in alignment \citep{everitt2018agi, hendrycks2022unsolved, ji2024ai, aisafety}, conformal prediction \citep{angelopoulos2022gentle}, property testing \citep{Ron, Goldreich_2017}, and linearized neural networks \citep{misiakiewicz2023six}.

RLHF \citep{christiano} has been notably effective in aligning LLMs with human values and intentions, as demonstrated by \citep{ouyang2022training}. Our work considers attempts at alignment that generalzies to models without human-interpretable outputs, which has connections to the scalable oversight problem \citep{irving2018ai, christiano2018supervising, wu2021recursively}. Goal misgeneralization  \citep{langosco22a, shah2022goal} has potential connections to the underspecified pipeline \citep{damour2022underspecification} considered in this paper in the sense that models with equivalent performance according to the training metric may differ in some other user-desired property during deployment. One of the main methods of assurance \citep{Batarseh_2021}, which is concerned with assessing the alignment of pre-trained AI systems, is safety evaluations \citep{perez2022discovering, shevlane2023model} meant to assess risk during deployment, which also has connections to our approach.

The work of \citet{mitigating} closely aligns with ours in both methodology and theme, utilizing conformal risk control to reduce LLM hallucinations \citep{hallucinating}. We discuss connections to this work in Appendix ~\ref{sec:hallucination}.

\section{Discussion}

We introduce a method to align pre-trained models with desired user properties using conformal risk control. By post-processing outputs using property dependent loss functions, we provide probabilistic guarantees that conformal intervals contain functions close to the desired set. This allows for alignment without retraining, effective in both generative and non-generative contexts. Future work should extend these techniques to more properties, explore sample complexity and adaptive querying, and potentially apply them to policy functions in MDP settings for RL agent safety guarantees.

\bibliographystyle{plainnat}
\bibliography{references}


\appendix

\section{Property Testing}\label{property-testing-appendix}

In this section, we provide a brief introduction to property testing, drawing upon standard definitions and concepts from the field. Property testing is a framework for designing algorithms that quickly decide whether a large, complex object (such as a function or a graph) possesses a certain property or is \emph{far} from having that property. The goal is to make this decision by inspecting only a small, random portion of the object, thereby significantly reducing the computational effort required compared to examining the entire object.

Our definitions and explanations are based on \cite{Goldreich_2017}, which offers a comprehensive introduction to property testing. We focus on the basic notions essential for understanding property testing as used in this paper, including the concepts of distance between objects, testers, and proximity-oblivious testers.

\subsection{Distance and $\varepsilon$-Farness}

A central concept in property testing is the notion of distance between objects, which allows us to formalize what it means for an object to be \emph{far} from satisfying a property. For functions, this distance is typically measured with respect to a distribution over the input domain.

\begin{definition}[$\varepsilon$-far]
Let \( f: \mathcal{X} \to \mathcal{Y} \) be a function, \( \mathcal{D} \) a distribution over \( \mathcal{X} \), \( \varepsilon > 0 \), and \( \mathcal{P} \) a property (a set of functions). We say that \( f \) is \emph{\( \varepsilon \)-far} from \( \mathcal{P} \) with respect to \( \mathcal{D} \) if the distance \( \delta_{\mathcal{P}, \mathcal{D}}(f) > \varepsilon \), where
\[
\delta_{\mathcal{P}, \mathcal{D}}(f) = \inf_{g \in \mathcal{P}} \delta_{\mathcal{D}}(f, g), \quad \text{and} \quad \delta_{\mathcal{D}}(f, g) = \Pr_{X \sim \mathcal{D}}[f(X) \neq g(X)].
\]
\end{definition}

In other words, a function \( f \) is \( \varepsilon \)-far from \( \mathcal{P} \) if any function \( g \in \mathcal{P} \) agrees with \( f \) on at most a \( 1 - \varepsilon \) fraction of inputs sampled from \( \mathcal{D} \). This notion of distance allows us to quantify how much \( f \) differs from satisfying the property \( \mathcal{P} \).

While many property testing algorithms assume \( \mathcal{D} \) to be the uniform distribution over \( \mathcal{X} \), in some settings, particularly when dealing with arbitrary data distributions, it is natural to consider general distributions. This leads to the \emph{distribution-free} property testing model \cite{Goldreich_2017,distfree}, where algorithms are designed to work with any underlying distribution \( \mathcal{D} \).

\subsection{Testers and Error Types}

Property testing algorithms, or \emph{testers}, aim to distinguish between objects that have a certain property and those that are \( \varepsilon \)-far from it, by querying the object at a small number of locations. Testers can be categorized based on their error probabilities:

\begin{itemize}
    \item \textbf{One-sided error testers}: These testers always accept objects that have the property (zero false negatives), but only reject objects that do not satisfy the property with some probability that may be below 1. 
    \item \textbf{Two-sided error testers}: These testers may err in both directions, accepting objects that are \( \varepsilon \)-far from the property or rejecting objects that have the property, but with bounded error probabilities.
\end{itemize}

Technically such testers are referred to as \emph{oracle machines}. An \emph{oracle machine} is a theoretical computational model, specifically a Turing machine that has access to an \emph{oracle}. The oracle is a black box that can compute certain functions or provide certain information that the machine cannot compute on its own. In our context, the oracle machine $\mathcal{M}$ can query the function $F$ at any point $x \in \mathcal{X}$ to receive $F(x)$ and can obtain samples from the distribution $\mathcal{D}$. It does not have explicit knowledge of the internal structure of $F$ or $\mathcal{D}$ but can interact with them through these oracle queries.

For our purposes in this work we will only need to discuss one-sided error testers, which we can formally define as follows:

\begin{definition}[One-Sided Error Tester]
A \emph{one-sided error tester} for a property \( \mathcal{P} \) is a probabilistic oracle machine \( \mathcal{M} \) that, given a distance parameter \( \varepsilon > 0 \), oracle access to a function \( f : \mathcal{X} \to \mathcal{Y} \), and oracle access to samples from a distribution \( \mathcal{D} \) over \( \mathcal{X} \), satisfies:
\begin{enumerate}
    \item If \( f \in \mathcal{P} \), then \( \Pr[\mathcal{M}^{f, \mathcal{D}}(\varepsilon) = \text{Accept}] = 1 \).
    \item If \( f \) is \( \varepsilon \)-far from \( \mathcal{P} \) with respect to \( \mathcal{D} \), then \( \Pr[\mathcal{M}^{f, \mathcal{D}}(\varepsilon) = \text{Accept}] \leq \frac{1}{3} \).
\end{enumerate}
\end{definition}

The choice of error probability \( \frac{1}{3} \) is arbitrary and can be reduced to any \( \delta \in (0, \frac{1}{2}) \) by repeating the tester multiple times and taking the majority outcome.

\subsection{Proximity-Oblivious Testers}

In many property testing algorithms, the proximity parameter \( \varepsilon \) is used only to determine the number of iterations or samples needed, rather than affecting the core logic of the tester. This observation leads to the concept of \emph{proximity-oblivious testers} (POTs), where the basic testing procedure is independent of \( \varepsilon \).

\begin{definition}[Proximity-Oblivious Tester]
A \emph{proximity-oblivious tester} for a property \( \mathcal{P} \) is a probabilistic oracle machine \( \mathcal{T} \) that operates without knowledge of the proximity parameter \( \varepsilon \) and satisfies:
\begin{enumerate}
    \item If \( f \in \mathcal{P} \), then \( \Pr[\mathcal{T}^{f, \mathcal{D}} = \text{Accept}] =1 \)
    \item If \( f \notin \mathcal{P} \), then \( \Pr[\mathcal{T}^{f, \mathcal{D}} = \text{Reject}] \geq \rho(\delta_{\mathcal{P}, \mathcal{D}}(f)) \), where \( \rho : (0, 1] \to (0, 1] \) is a non-decreasing function called the \emph{detection probability}.
\end{enumerate}
\end{definition}

Note that the probability of rejecting a function increases with its distance from \( \mathcal{P} \), as quantified by the detection probability function \( \rho \).

To convert a POT into a standard tester with error probability \( \frac{1}{3} \), we can repeat the basic testing procedure a sufficient number of times, determined by \( \rho(\varepsilon) \), and accept or reject based on the aggregate outcomes. Making \( O\left( \frac{1}{\rho(\varepsilon)} \right) \) independent calls to \( \mathcal{T} \) and accepting if all calls accept yields a tester that satisfies the standard definition with the desired error probability \cite{pot}.

\subsection{Adaptivity and Non-Adaptivity}

Testers can also be classified based on whether they are adaptive or non-adaptive in their querying strategy:

\begin{itemize}
    \item \textbf{Non-adaptive testers}: The queries made by the tester are determined in advance, based solely on the random coins and explicit inputs, and do not depend on the answers to previous queries.
    \item \textbf{Adaptive testers}: The tester's queries may depend on the answers to previous queries, allowing for potentially more powerful testing strategies.
\end{itemize}

Non-adaptive testers are often simpler and easier to analyze, and many property testing algorithms are designed to be non-adaptive. In the context of POTs, non-adaptivity is particularly advantageous because the tester's behavior remains consistent regardless of the function being tested. In this work we focus only on non-adaptive testers, but exploring the use of adaptive testers is an exciting future direction.

\section{Proofs for Proximity Oblivious Testers}\label{pot-proofs}

We provide proofs that each of the testing algorithms we use are indeed Proximity Oblivious Testers according to  Definition~\ref{def:pot}. However we note that these properties are well studied in the $f:\cX \to \cY$ case and thus we provide these proofs in the set-valued $f: \cX \to 2^{\cY}$ case mainly for completeness. 

\subsection{Proximity Oblivious Tester for constant property}\label{pot-proofs-const}

For the sake of convenience we repeat the setting from the Example in Section~\ref{property-testing}.

Consider functions $f: \mathbb{R}^d \to \mathbb{R}$, and let $\mathcal{P}$ denote the property that $f$ is constant in the $k$-th dimension. Assume $\cD$ is the empirical distribution of the inputs $X \in \mathbb{R}^d$ for some fixed dataset.

Restrict to set-valued functions $F$ that output compact and connected intervals of the form $[a,b] \subseteq \mathbb{R}$ for $a,b \in \mathbb{R}$. The candidate POT $\mathcal{T}^{F, \mathcal{D}}$ for whether such a set-valued function $F$ accommodates $\cP$ is then as follows: sample $X, X' \sim \mathcal{D}$,  If $F(X) \cap F(X') \neq \varnothing$, then Accept; otherwise, Reject. We claim that this is indeed a POT.

\begin{proof}
Assume that \( F \) accommodates \( \mathcal{P} \). This means there exists a function \( g \in \mathcal{P} \) such that for all \( X \in \mathcal{X} \), \(
g(X) \in F(X).\) Since \( g \) is constant in the \( k \)-th dimension, there exists a function \( h: \mathbb{R}^{d-1} \to \mathbb{R} \) such that for all \( x_k \in \mathbb{R} \) and \( x^{-k} \in \mathbb{R}^{d-1} \), \(g(x_k, x^{-k}) = h(x^{-k}). \) And in the algorithm we sample \( X = (x_k, x^{-k}) \) from \( \mathcal{D} \) and we independently sample \( x_k' \) from the marginal distribution of \( x_k \) and set \( X' = (x_k', x^{-k}) \). Then since \( x^{-k} \) is the same for both \( X \) and \( X' \), and \( g \) is constant in \( x_k \), we have:
\[
g(X) = h(x^{-k}) = g(X').
\]

Because \( g(X) \in F(X) \) and \( g(X') \in F(X') \), it follows that:
\[
h(x^{-k}) \in F(X) \cap F(X').
\]

Therefore, \( F(X) \cap F(X') \neq \emptyset \), and the algorithm returns Accept.

Now suppose that \( F \) is \( \varepsilon \)-far from \( \mathcal{P} \). This means that for any function \( g \in \mathcal{P} \),
\[
\delta_{\mathcal{D}}(F, g) = \Pr_{X \sim \mathcal{D}}[g(X) \notin F(X)] > \varepsilon.
\]

For each \( g \in \mathcal{P} \), define the set of points where \( g \) does not belong to \( F(X) \):
\[
S_g = \{ X \in \mathcal{X} : g(X) \notin F(X) \}.
\]
Although the sets \( S_g \) may vary with \( g \), for each \( g \in \mathcal{P} \), we have \(\mathcal{D}(S_g) > \varepsilon.\) Fix an arbitrary function \( g \in \mathcal{P} \). For a fixed \( x^{-k} \in \mathbb{R}^{d-1} \), define:
\[
p_g(x^{-k}) = \Pr_{x_k \sim \mathcal{D}_{x_k | x^{-k}}}[g(x_k, x^{-k}) \notin F(x_k, x^{-k})],
\]
where \( \mathcal{D}_{x_k | x^{-k}} \) is the conditional distribution of \( x_k \) given \( x^{-k} \).

Since \( \delta_{\mathcal{D}}(F, g) = \mathbb{E}_{x^{-k}}[p_g(x^{-k})] > \varepsilon \), it follows that the average error probability over \( x^{-k} \) is greater than \( \varepsilon \).

When we sample \( X = (x_k, x^{-k}) \) and \( X' = (x_k', x^{-k}) \) with the same \( x^{-k} \) and independent \( x_k \) and \( x_k' \), the probability that both \( g(X) \notin F(X) \) and \( g(X') \notin F(X') \) is:
\[
\Pr[g(X) \notin F(X) \text{ and } g(X') \notin F(X') \mid x^{-k}] = p_g(x^{-k})^2.
\]

Taking expectation over \( x^{-k} \), we get:
\[
\Pr[g(X) \notin F(X) \text{ and } g(X') \notin F(X')] = \mathbb{E}_{x^{-k}}[p_g(x^{-k})^2] \geq \left( \mathbb{E}_{x^{-k}}[p_g(x^{-k})] \right)^2 > \varepsilon^2.
\]

Under the event where both \( g(X) \notin F(X) \) and \( g(X') \notin F(X') \), we have \( g(X) = h(x^{-k}) \notin F(X) \) and \( g(X') = h(x^{-k}) \notin F(X') \). Therefore, \( h(x^{-k}) \notin F(X) \cap F(X') \).

Since \( g \) is constant in \( x_k \), it assigns the same value \( h(x^{-k}) \) to both \( X \) and \( X' \). No other function \( g' \in \mathcal{P} \) can assign a different value at \( x^{-k} \), because all functions in \( \mathcal{P} \) must be constant in the \( k \)-th dimension. Thus, there is no value \( y \) such that \( y \in F(X) \cap F(X') \) that could correspond to \( g'(X) = y \) and \( g' \in \mathcal{P} \). Hence, \( F(X) \cap F(X') = \emptyset \), and the algorithm returns Reject. 

Thus the algorithm Rejects with probability at least \( \varepsilon^2 \) and therefore the algorithm has detection probability \( \rho(\varepsilon) = \varepsilon^2 \) and satisfies Definition~\ref{def:pot} for being a Proximity Oblivious Tester.
\end{proof}

\subsection{Proximity Oblivious Tester for monotonic property}\label{pot-proofs-mon}

We prove that Algorithm~\ref{alg:POT_monotonicity} is indeed a POT for the property $\cP$ of a function being monotonically decreasing in a given dimension $k$. 

\begin{proof}
Assume that \( F \) accommodates \( \mathcal{P} \). Then there exists a function \( g \in \mathcal{P} \) such that for all \( X \in \mathcal{X} \),
\[
g(X) \in F(X).
\]
Since \( g \) is monotonically decreasing in \( x_k \), for any fixed \( x^{-k} \in \mathbb{R}^{d-1} \) and any \( x_k, x_k' \in \mathbb{R} \) with \( x_k \leq x_k' \),
\[
g(x_k, x^{-k}) \geq g(x_k', x^{-k}).
\]

In the algorithm, we sample \( X_1 = (x_1^k, x^{-k}) \) and \( X_2 = (x_2^k, x^{-k}) \), where \( x^{-k} \) is fixed and \( x_1^k, x_2^k \) are independent samples from the marginal distribution of \( x_k \).
First consider the case that \( x_1^k < x_2^k \).
Since \( g \) is monotonically decreasing, we have \( g(X_1) \geq g(X_2) \). Because \( g(X_i) \in F(X_i) \), it follows that
        \[
        \max F(X_1) \geq g(X_1) \geq g(X_2) \geq \min F(X_2).
        \]
Therefore, \( \max F(X_1) \geq \min F(X_2) \), and the condition \( \max F(X_1) < \min F(X_2) \) in step 3 is not satisfied. Now in the second case we have \( x_2^k < x_1^k \). Similarly, \( g(X_2) \geq g(X_1) \). Thus,
        \[
        \max F(X_2) \geq g(X_2) \geq g(X_1) \geq \min F(X_1),
        \]
implying \( \max F(X_2) \geq \min F(X_1) \), so the condition in step 4 is not satisfied.

In both cases, the algorithm does not Reject and therefore Accepts. Thus, if \( F \) accommodates \( \mathcal{P} \), the algorithm always Accepts.

Now suppose \( F \) is \( \varepsilon \)-far from \( \mathcal{P} \). This means that for any \( g \in \mathcal{P} \),
\[
\delta_{\mathcal{D}}(F, g) = \Pr_{X \sim \mathcal{D}}[g(X) \notin F(X)] > \varepsilon.
\]

For each \( g \in \mathcal{P} \), define the set
\[
S_g = \{ X \in \mathcal{X} : g(X) \notin F(X) \}.
\]
While \( S_g \) may vary with \( g \), we have \( \mathcal{D}(S_g) > \varepsilon \) for each \( g \in \mathcal{P} \).

For a fixed \( x^{-k} \in \mathbb{R}^{d-1} \), define
\[
p_g(x^{-k}) = \Pr_{x_k \sim \mathcal{D}_{x_k | x^{-k}}}[g(x_k, x^{-k}) \notin F(x_k, x^{-k})],
\]
where \( \mathcal{D}_{x_k | x^{-k}} \) is the conditional distribution of \( x_k \) given \( x^{-k} \). Then,
\[
\mathbb{E}_{x^{-k}}[p_g(x^{-k})] = \Pr_{X \sim \mathcal{D}}[g(X) \notin F(X)] > \varepsilon.
\]

When we sample \( X_1 \) and \( X_2 \) with the same \( x^{-k} \) and independent \( x_k \) values, the probability that both \( g(X_1) \notin F(X_1) \) and \( g(X_2) \notin F(X_2) \) is
\[
\Pr[g(X_1) \notin F(X_1) \text{ and } g(X_2) \notin F(X_2) \mid x^{-k}] = p_g(x^{-k})^2.
\]

Taking expectation over \( x^{-k} \), we get
\[
\Pr[g(X_i) \notin F(X_i) \text{ for } i=1,2] = \mathbb{E}_{x^{-k}}[p_g(x^{-k})^2] \geq \left( \mathbb{E}_{x^{-k}}[p_g(x^{-k})] \right)^2 > \varepsilon^2,
\]

Under the event where both \( g(X_1) \notin F(X_1) \) and \( g(X_2) \notin F(X_2) \), consider the following cases. In the first case, \( x_1^k < x_2^k \).Since \( g \) is monotonically decreasing, \( g(X_1) \geq g(X_2) \). However, \( g(X_1) \notin F(X_1) \) and \( g(X_2) \notin F(X_2) \). To prevent any \( g' \in \mathcal{P} \) from satisfying \( g'(X_i) \in F(X_i) \), it must be that \( \max F(X_1) < \min F(X_2) \). If \( \max F(X_1) \geq \min F(X_2) \), a function \( g' \) could exist with \( g'(X_i) \in F(X_i) \) and satisfying monotonicity. In the second case, \( x_2^k < x_1^k \), similar reasoning leads to \( \max F(X_2) < \min F(X_1) \). Therefore, when both \( g(X_i) \notin F(X_i) \) and the appropriate ordering of \( x_k \) holds, the algorithm Rejects. 

Since the probability of both \( g(X_i) \notin F(X_i) \) occurring is at least \( \varepsilon^2 \), and the conditions for rejection are met under these events, the algorithm Rejects with probability at least \( \varepsilon^2 \).

Thus, Algorithm~\ref{alg:POT_monotonicity} satisfies the conditions of a Proximity-Oblivious Tester for the property \( \mathcal{P} \) of monotonically decreasing in a given dimension $k$. 

\end{proof}

\subsection{Proximity Oblivious Tester for concave property}

\textbf{Proof:}
Assume that \( F \) accommodates \( \mathcal{P} \); that is, there exists a function \( g \in \mathcal{P} \) such that for all \( X \in \mathcal{X} \), \(g(X) \in F(X).\) Since \( g \) is concave in the \( k \)-th dimension, for any \( X_{\text{left}}, X_{\text{mid}}, X_{\text{right}} \) constructed as in the algorithm, the following inequality holds:
\[
g(X_{\text{mid}}) \leq \alpha g(X_{\text{left}}) + (1 - \alpha) g(X_{\text{right}}),
\]
where
\[
\alpha = \frac{X_{\text{right}}[k] - X_{\text{mid}}[k]}{X_{\text{right}}[k] - X_{\text{left}}[k]}.
\]

Because \( g(X_i) \in F(X_i) \) for \( i \in \{ \text{left}, \text{mid}, \text{right} \} \), we have:
\begin{align*}
\min F(X_{\text{mid}}) &\leq g(X_{\text{mid}}), \\
g(X_{\text{left}}) &\leq \max F(X_{\text{left}}), \\
g(X_{\text{right}}) &\leq \max F(X_{\text{right}}).
\end{align*}

Substituting these into the concavity inequality, we obtain:
\[
\min F(X_{\text{mid}}) \leq \alpha \max F(X_{\text{left}}) + (1 - \alpha) \max F(X_{\text{right}}).
\]

Therefore, the condition in line 7 of the algorithm is not satisfied (i.e., the inequality for rejection does not hold), and the algorithm returns Accept. 

Now suppose that \( F \) is \( \varepsilon \)-far from \( \mathcal{P} \). This means that for any \( g \in \mathcal{P} \),
\[
\delta_{\mathcal{D}}(F, g) = \Pr_{X \sim \mathcal{D}}[g(X) \notin F(X)] > \varepsilon.
\]

For each \( g \in \mathcal{P} \), define the set:
\[
S_g = \{ X \in \mathcal{X} : g(X) \notin F(X) \}.
\]
While \( S_g \) may vary with \( g \), we have \( \mathcal{D}(S_g) > \varepsilon \) for each \( g \in \mathcal{P} \). The probability that \( g(X_{\text{mid}}) \notin F(X_{\text{mid}}) \) is greater than \( \varepsilon \), and similarly, the probabilities that \( g(X_{\text{left}}) \notin F(X_{\text{left}}) \) and \( g(X_{\text{right}}) \notin F(X_{\text{right}}) \) are each greater than \( \varepsilon \). Thus, by independence of the samples, the joint probability that all three events occur is at least \( \varepsilon^3 \):
\[
\Pr\big[ g(X_i) \notin F(X_i) \text{ for all } i \in \{ \text{left}, \text{mid}, \text{right} \} \big] \geq \varepsilon^3.
\]

Under the event where \( g(X_i) \notin F(X_i) \) for all \( i \in \{ \text{left}, \text{mid}, \text{right} \} \), we will show that no concave function \( g' \in \mathcal{P} \) can satisfy \( g'(X_i) \in F(X_i) \) for all \( i \).

Assume for contradiction that there exists a concave function \( g' \in \mathcal{P} \) such that
\[
g'(X_i) \in F(X_i) \quad \text{for all } i \in \{ \text{left}, \text{mid}, \text{right} \}.
\]

Since \( g' \) is concave in the \( k \)-th dimension, it must satisfy:
\[
g'(X_{\text{mid}}) \leq \alpha g'(X_{\text{left}}) + (1 - \alpha) g'(X_{\text{right}}).
\]

Using the bounds from \( F(X_i) \):
\begin{align*}
g'(X_{\text{mid}}) &\geq \min F(X_{\text{mid}}), \\
g'(X_{\text{left}}) &\leq \max F(X_{\text{left}}), \\
g'(X_{\text{right}}) &\leq \max F(X_{\text{right}}).
\end{align*}

Substituting these into the concavity inequality:
\[
\min F(X_{\text{mid}}) \leq \alpha \max F(X_{\text{left}}) + (1 - \alpha) \max F(X_{\text{right}}).
\]

However, the algorithm's condition in line 7 specifies that:
\[
\min F(X_{\text{mid}}) > \alpha \max F(X_{\text{left}}) + (1 - \alpha) \max F(X_{\text{right}}).
\]

This is a contradiction. Therefore, our assumption is false; no concave function \( g' \in \mathcal{P} \) exists that satisfies \( g'(X_i) \in F(X_i) \) for all \( i \). Since under this event (which occurs with probability at least \( \varepsilon^3 \)) no concave function \( g' \in \mathcal{P} \) can fit within \( F(X_i) \) at the points \( X_{\text{left}}, X_{\text{mid}}, X_{\text{right}} \), the algorithm correctly Rejects.

Therefore, the algorithm Rejects with probability at least \( \varepsilon^3 \), so the algorithm satisfies the conditions of a Proximity-Oblivious Tester for the property \( \mathcal{P} \) of concavity in the $k$-th dimension.

\section{Proof of multi-lambda conformal risk control}\label{multi-lambda}

We emphasize that this claim follows directly from the original proof and result of conformal risk control \citep{angelopoulos2024conformal} and we present it here mainly for completeness given our use of it in this multi-dimensional case. 

\textit{Proof of Proposition \ref{prop:1}.} Our proof closely mimics the original proof of boudned risk found in \citet{angelopoulos2024conformal}.

\begin{proof}

Let 
\[
\hat{R}_{n+1}(\bl) = \frac{(L_1(\bl) + \ldots + L_{n+1}(\bl))}{(n + 1)} \quad \text{and}
\]
\begin{align*}
\boldsymbol{\Lambda}_{\min} &= \text{Min} \left\{ \bl \in \boldsymbol{\Lambda} : \hat{R}_{n+1}(\bl) \leq \alpha \right\} \\
\bhl' &\in  \argmin_{\bl \in \boldsymbol{\Lambda}_{\min}} g(\bl)
\end{align*}

The fact that \(\inf_{\bl} L_i(\bl) = L_i(\bl^{\max}) \leq \alpha\) and the fact that $g$ is striclty increasing implies $\bhl'$ is well-defined almost surely. Since \(L_{n+1}(\bl) \leq B\), we have
\[
\hat{R}_{n+1}(\bl) = \frac{n}{n+1} \hat{R}_n(\bl) + \frac{L_{n+1}(\bl)}{n+1} \leq \frac{n}{n+1} \hat{R}_n(\bl) + \frac{B}{n+1}.
\]
So if $\bl$ is such that
\[
\frac{n}{n+1} \hat{R}_n(\bl) + \frac{B}{n+1} \leq \alpha, 
\]

Then we know $\hat{R}_{n+1}(\bl) \leq \alpha$. So when there exists $\bl \in \boldsymbol{\Lambda}$ such that the the above inequality holds, this implies that either $\bhl' \leq \bhl$ or $\bhl \in \boldsymbol{\Lambda}_{\min}$, which in either case implies $g(\bhl') \leq g(\bhl)$. And if $\frac{n}{n+1} \hat{R}_n(\bl) + \frac{B}{n+1} > \alpha $ for all $\bl \in \boldsymbol{\Lambda}$, then we must have $\bhl =\bl^{\max} \geq \bhl'$. So we have $g(\bhl') \leq g(\bhl)$ almost surely. And since $L_i(\bl)$ is non-increasing with respect to $g(\bl)$ we have

\[
\mathbb{E}\left[L_{n+1}(\bhl)\right] \leq \mathbb{E}\left[L_{n+1}(\bhl')\right].
\]

Let $E = \{L_1, \ldots, L_{n+1}\}$, where  $L_{n+1}(\bl) \mid E \sim \text{Uniform}(E)$ by exchangeability, and $\bhl'$ is constant conditional on $E$. Then with the right-continuity of $L_i$ in each dimension we have
\[
\mathbb{E}\left[L_{n+1}(\bhl') \mid E\right] = \frac{1}{n+1} \sum_{i=1}^{n+1} L_i(\bhl'_i) \leq \alpha.
\]

Then applying the law of total expectation and our inequality from above we have 

\begin{align*}
    \mathbb{E}\left[L_{n+1}(\bhl)\right] \leq \alpha,
\end{align*}
completing the proof. 
\end{proof}

\section{Additional Examples}

\subsection{Additional monotonicity examples}

We present more examples of the conformal alignment procedure for monotonicity from Section \ref{sec:monotonicity}. We keep everything the same in terms of 70-15-15 train-calibrate-test split, averaging over 10 runs, and using XGBoost regression models. All experiments were run on an Apple M1 MacBook Pro. We train one constrained model on the train and calibration sets and one unconstrained model on the training set. We use default parameters without hyperparameter tuning. We use the same POT $\TMT$ to obtain loss functions for applying conformal risk control on the unconstrained model using the calibration set. 

\begin{table}[h!]
  \centering
  \caption{Abalone, $n=4177$. Monotonically increasing on Shell\_weight. $\bl^{\max}=(5,5)$.}
  \begin{tabular}{c@{\hspace{12pt}}l@{\hspace{12pt}}c@{\hspace{12pt}}c@{\hspace{12pt}}c@{\hspace{12pt}}c}
    \toprule
    $\alpha$ & $\bl$ & Metric & Unconstrained & Adjusted & Constrained \\
    \midrule
    & $\lambda^+ = 0.17_{\scriptsize{(\pm 0.10)}}$ & MSE & $5.41_{\scriptsize{(\pm 0.18)}}$ & $5.46_{\scriptsize{(\pm 0.19)}}$ & $5.45_{\scriptsize{(\pm 0.17)}}$ \\
    0.1 & $\lambda^- = 0.20_{\scriptsize{(\pm 0.10)}}$ & Risk & $0.58_{\scriptsize{(\pm 0.08)}}$ & $0.09_{\scriptsize{(\pm 0.002)}}$ & $0.00_{\scriptsize{(\pm 0.00)}}$ \\
    \midrule
    & $\lambda^+ = 0.43_{\scriptsize{(\pm 0.24)}}$ & MSE & $5.41_{\scriptsize{(\pm 0.18)}}$ & $5.63_{\scriptsize{(\pm 0.22)}}$ & $5.45_{\scriptsize{(\pm 0.17)}}$ \\
    0.05 & $\lambda^- = 0.54_{\scriptsize{(\pm 0.21)}}$ & Risk & $0.58_{\scriptsize{(\pm 0.08)}}$ & $0.04_{\scriptsize{(\pm 0.001)}}$ & $0.00_{\scriptsize{(\pm 0.00)}}$ \\
    \midrule
    & $\lambda^+ = 1.03_{\scriptsize{(\pm 0.52)}}$ & MSE & $5.41_{\scriptsize{(\pm 0.18)}}$ & $6.67_{\scriptsize{(\pm 0.37)}}$ & $5.45_{\scriptsize{(\pm 0.17)}}$ \\
    0.01 & $\lambda^- = 1.51_{\scriptsize{(\pm 0.53)}}$ & Risk & $0.58_{\scriptsize{(\pm 0.08)}}$ & $0.01_{\scriptsize{(\pm 0.001)}}$ & $0.00_{\scriptsize{(\pm 0.00)}}$ \\
    \bottomrule
  \end{tabular}
\end{table}

\begin{table}[h!]
  \centering
  \caption{Concrete, $n=1030$. Monotonically increasing on Cement. $\bl^{\max}=(2,2)$. }
  \begin{tabular}{c@{\hspace{12pt}}l@{\hspace{12pt}}c@{\hspace{12pt}}c@{\hspace{12pt}}c@{\hspace{12pt}}c}
    \toprule
    $\alpha$ & $\bl$ & Metric & Unconstrained & Adjusted & Constrained \\
    \midrule
    & $\lambda^+ = 0.02_{\scriptsize{(\pm 0.02)}}$ & MSE & $24.15_{\scriptsize{(\pm 2.11)}}$ & $24.18_{\scriptsize{(\pm 2.11)}}$ & $22.44_{\scriptsize{(\pm 2.13)}}$ \\
    0.1 & $\lambda^- = 0.05_{\scriptsize{(\pm 0.04)}}$ & Risk & $0.47_{\scriptsize{(\pm 0.07)}}$ & $0.07_{\scriptsize{(\pm 0.01)}}$ & $0.00_{\scriptsize{(\pm 0.00)}}$ \\
    \midrule
    & $\lambda^+ = 0.16_{\scriptsize{(\pm 0.11)}}$ & MSE & $24.15_{\scriptsize{(\pm 2.11)}}$ & $24.37_{\scriptsize{(\pm 2.13)}}$ & $22.44_{\scriptsize{(\pm 2.13)}}$ \\
    0.05 & $\lambda^- = 0.40_{\scriptsize{(\pm 0.17)}}$ & Risk & $0.47_{\scriptsize{(\pm 0.07)}}$ & $0.04_{\scriptsize{(\pm 0.001)}}$ & $0.00_{\scriptsize{(\pm 0.00)}}$ \\
    \midrule
    & $\lambda^+ = 1.48_{\scriptsize{(\pm 0.28)}}$ & MSE & $24.15_{\scriptsize{(\pm 2.11)}}$ & $25.45_{\scriptsize{(\pm 2.11)}}$ & $22.44_{\scriptsize{(\pm 2.13)}}$ \\
    0.01 & $\lambda^- = 1.60_{\scriptsize{(\pm 0.25)}}$ & Risk & $0.47_{\scriptsize{(\pm 0.07)}}$ & $0.004_{\scriptsize{(\pm 0.002)}}$ & $0.00_{\scriptsize{(\pm 0.00)}}$ \\
    \bottomrule
  \end{tabular}
\end{table}

\newpage

\subsection{Concavity results}
\label{concavity-empirics}

We report the results of applying the conformal alignment procedure for the property of concacity using Algorithm \ref{alg:POT_concavity} as our POT. Implementation details are the same as for the monotonicity examples in Section \ref{sec:monotonicity} besides this change of POT and our procedure for choosing a specific point in order to measure performance on the accuracy metric, which here we take to be MAE. In this case our choice of the point prediction uses a scaling adjustment factor based on the position relative to the feature value at the maximum prediction, scaling from adding 0 to the predictions at the minimum feature value to adding $\lambda$ to the prediction at the feature value of the maximum prediction, then adding 0 again at the maximum feature value. We re-emphasize, though, that the actual point prediction is not our main concern but rather the risk, which we can see matches the anticipated values well.

\begin{table}[h!]
  \centering
  \caption{Seoul Bike Sharing, $n=8760$. Concave on temperature. $\bl^{\max}=(100,100)$}
  \begin{tabular}{c@{\hspace{12pt}}l@{\hspace{12pt}}c@{\hspace{12pt}}c@{\hspace{12pt}}c}
    \toprule
    $\alpha$ & $\bl$ & Metric & Unconstrained & Conformal \\
    \midrule
    & $\lambda^+ = 15.93_{\scriptsize{(\pm 19.63)}}$ & MAE & $94.09_{\scriptsize{(\pm 2.85)}}$ & $99.58_{\scriptsize{(\pm 4.03)}}$ \\
    0.1 & $\lambda^- = 24.67_{\scriptsize{(\pm 20.78)}}$ & Risk & $0.54_{\scriptsize{(\pm 0.02)}}$ & $0.11_{\scriptsize{(\pm 0.03)}}$ \\
    \midrule
    & $\lambda^+ = 25.40_{\scriptsize{(\pm 31.38)}}$ & MAE & $94.09_{\scriptsize{(\pm 2.85)}}$ & $106.44_{\scriptsize{(\pm 7.67)}}$ \\
    0.05 & $\lambda^- = 38.67_{\scriptsize{(\pm 32.64)}}$ & Risk & $0.54_{\scriptsize{(\pm 0.02)}}$ & $0.06_{\scriptsize{(\pm 0.02)}}$ \\
    \midrule
    & $\lambda^+ = 56.93_{\scriptsize{(\pm 38.30)}}$ & MAE & $94.09_{\scriptsize{(\pm 2.85)}}$ & $122.41_{\scriptsize{(\pm 11.41)}}$ \\
    0.01 & $\lambda^- = 71.00_{\scriptsize{(\pm 36.94)}}$ & Risk & $0.54_{\scriptsize{(\pm 0.02)}}$ & $0.02_{\scriptsize{(\pm 0.02)}}$ \\
    \bottomrule
  \end{tabular}
\end{table}

\newpage

\section{Details for a stylized examination of alignment persistence in AI models}\label{sec:rf-theory-dtails}

We begin by describing the stylized model setting that can help us better understand the impact of overparameterization and data quality on a model's ability to satisfy a property.

\paragraph{Notation.} 
We consider feature vectors $X$ to be samples from the uniform distribution on the sphere of radius $\sqrt{d}$ in $\R^d$, that is denoted by $\Sp^{d-1}(\sqrt{d})$. For a function $f$, defined on a domain of the random variable $X$, we define its $L^2$ norm by $\|f\|_{L^2}^2=\E_{X}[f(X)^2]$ where the expectation is with respect to the randomness in $X$. The notation $o_{n, \mathbb{P}}(1)$ for a positive integer means a quantity that goes to zero as $n$ goes to infinity.

\subsection{Noisy data setting}\label{subsec:noise-setting}

\paragraph{Data model.}
Consider a model where the data is generated according to the following equation,
\[
y=g(X) + h(X) + \noise\,,
\]
where $\noise$ has mean $0$ and variance $\tau^2$ and is independent of $X$. Assume that $h(X)$ represents `bad' data that is undesired and we do not want to learn. That is, we only want to learn $g(X)$, but there is a presence of biased noise, $h(X)$, in the data generating model. 

Relating this to an alignment setting, domain experts expect the data to be generated according to $g(X)+\textit{unbiased noise}$, such that $g(X)$ exhibits some desired property $\cP$/ However, the data is generated according to $g(X)+\textit{biased noise}$, potentially obscuring the desired behavior. One potential reason for the presence of biased noise in data could be due to a measurement error of the outcome that is correlated with select features, leading to an incorrectly calculated outcome. A biased measurement error could occur if there is incomplete data and the presence of the incomplete data is correlated with select features in an unexpected, systematic way (e.g., it could hypothetically be correlated to the type of hospital or location at which a patient is treated).

The goal of these analyses is then to examine the impact of the noise bias term on a model’s ability to describe a behavior that is upheld by $g(X)$, but not necessarily $g(X)+h(X)$, across model complexity. For example, $g(X)$ might be monotonically increasing in the first coordinate of $\vx$, but the summation of $g(X)+h(X)$ is not always. We will then try to draw conclusions about the distance between two kinds of models: one that is trained regularly without additional constraints, and another where we add a constraint to the training that forces the model to replicate the behavior of $g(X)$, despite $h(X)$. In the stylized model, when referring to the `distance' between two models, we consider it to be zero if their learned parameters are the same, and non-zero if their learned parameters are different, when trained on the same data.

\paragraph{Learning method.}  We will assume we have access to a data set of $n$ iid samples $\{(X_i,Y_i)\}_{i =1}^n$ obtained from the above data generating model, i.e., $Y_i = g(X_i)+h(X_i)+\noise_i$. We denote this data set by $D_n$.

The function class we use to fit the data is the random feature model, which can be thought of as a two-layer neural network, with $N$ hidden nodes, with the weights of the first layer set randomly, while the second layer weights are learned. This means the function class can be written as:
\begin{equation}
f_{\rf}(X; \va,\{\vw_j\}_{j\in[N]}) = \frac{1}{\sqrt{N}}\sum_{j\in[N]}a_j\af(\langle X,\vw_j\rangle)\,,
\end{equation}
where $\va\in\R^N$ denotes the second layer weights and $\af(\cdot):\R\to\R$ is a non-linear activation function. The random weights $\{\vw_j\}_{j\in[N]}$ are such that each $\vw_j$ is an iid sample from a fixed distribution. For brevity, we drop the reference to $\va$ and $\vw_j$'s and use $f_{\rf}(X)$ or $f_{\rf}(X;\va)$.

For the stylized model analysis, we consider a random feature model, because this function class has been studied in recent theory of deep learning models, and has shown some of the specific properties of modern overparameterized deep learning models such as concepts of benign overfitting and double descent \cite{mei2022generalization, misiakiewicz2023six}. We can thereby use this recent literature as a theoretical foundation to explain the potential impact of model complexity on model's ability to satisfy a property in large train set sizes.

Interestingly, the squared loss function for this problem is convex and we can minimize it by ridge regression. Mathematically, this means for any regularization parameter, $\lambda$, we will search for a vector of real numbers $\hat{\va}_\lambda\in\R^N$ such that,
\begin{equation}\label{eq:unconstrained-learning}
    \hat{\va}_\lambda = \arg\min_{\va\in\R^N} \sum_{i\in[n]} \left(Y_i-f_{\rf}(X_i)\right)^2 + \lambda\|\va\|_2^2\,.
\end{equation}

\subsection{RF theory}\label{subsec:rf-theory}

Assume a certain property, denoted by $\cP$ such that the function $g(X)$ satisfies $\cP$. However, assume that $h(X)$ is such that $g(X)+h(X)$ does not satisfy $\cP$. 

Assume there is a way of constraining the training of $f_{\rf}(X)$ to satisfy the $\cP$. More formally, assume $C_{\cP}\subset\R^N$ is the set of all coefficients $\va$ such that $f_{\rf}(X;\va)$ satisfies $\cP$, i.e.,
\[
C_{\cP} = \{\va~|~\va\in\R^N~\text{and}~f_{\rf}(X;\va)~\text{satisfies}~\cP\}\,.
\]
Then we can consider the following constrained version of the model:
\begin{equation}\label{eq:unconstrained-learning}
    \hat{\va}_{\lambda,\cP} = \arg\min_{\va\in C_{\cP}} \sum_{i\in[n]} \left(Y_i-f_{\rf}(X_i)\right)^2 + \lambda\|\va\|_2^2\,.
\end{equation}

Now assume that the functions $g$ and $h$ are polynomials such that $\deg_g<\deg_h$. Further assume that $n>d^{\deg_h+\delta}$, where $\delta$ is some small positive constant (e.g., $\delta=0.1$) and that $d$ is sufficiently large. Also assume that the activation function $\sigma$ upholds the following genericity conditions (as stated and assumed in Theorem 3 in Section 4.2 of \cite{misiakiewicz2023six}): 1) $|\sigma(x)|\leq c_0\exp(c_1|x|)$ for constants $c_0, c_1>0$; 2) for any $k\geq 0$, $\sigma$ has a non-zero Hermite coefficient $\mu_k(\sigma):= \mathbb{E}_G\{\sigma(G)\text{He}_k(G)\}\neq 0$ (where $G \sim \normal(0,1)$ and $\text{He}_k$ is the $k$-th Hermite polynomial, with the standard normalization $\langle \text{He}_j, \text{He}_k\rangle_{L^2(\normal(0,1))}=k!\mathbbm{1}_{\{j=k\}}$).

To analyze the impact of model complexity on a model's adherence to $\cP$, we consider two settings for the stylized model:
\begin{itemize}
    \item {\bf Classic:} $d^{\deg_g+\delta}<N<d^{\deg_h-\delta}$.
    \item {\bf Underspecified:} $N>d^{\deg_h+\delta}$.
\end{itemize}

To formalize our results, we will in part rely on Theorem 3 in Section 4.2 of \cite{misiakiewicz2023six}. This theorem illustrates the roles that the number of hidden nodes, $N$, and the number of data points, $n$, play in determining the test error that can be achieved by a random feature model when approximating a function. Given the assumptions made in their work, in the setting where $N\ll n$, the number of hidden nodes, $N$, will limit the test error, while if $n\ll N$, the number of data points, $n$, will limit the test error \cite{misiakiewicz2023six}. Building on this intuition, we first state an informal implication of this theorem  \cite{misiakiewicz2023six} applied to our stylized model: under the same assumptions made in their work, for a sufficiently large $d$, in the classic setting, there exists a range of regularization parameters $[0,\lambda^*]$ such that for all $\lambda$ in that range, $f_{\rf}(X;\hat{\va}_\lambda)$ will learn a polynomial of at least $\deg_g$, but strictly less than $\deg_h$. But in the underspecified setting, $f_{\rf}(X;\hat{\va}_\lambda)$ learns both parts $g(X)$ and $h(X)$, for all $\lambda$ in a different range of regularization parameters $[0,\lambda^*]$, meaning it learns the noise bias. This means, in a large data domain, the behavior of the classic and underspecified setting is different. 

In order to formalize this intuition for the underspecified setting, we need to state an assumption on the degree of violation of property $\cB$ around the function $g+h$.

\begin{assumption}[Property Unsatisfied in a Neighborhood]\label{assump:behavior-robust-underspecified}
There exists a positive constant $R$ such that, for all $\va'$ with $\|(g+h)-f_{\rf}(\cdot;\va')\|_{L^2}\le R$, the function $f_{\rf}(X,\va')$ does not satisfy the property $\cP$.
\end{assumption}

\begin{theorem}[Non-vanishing Distance in Underspecified Setting]
    Assume $N>d^{\deg_h+\delta}$, $n>d^{\deg_h+\delta}$, $\max(N/n, n/N)\geq d^\delta$ for a constant $\delta>0$, Assumption \ref{assump:behavior-robust-underspecified} holds, $\sigma$ upholds the genericity conditions stated above, and $d$ is sufficiently large. Then, in the underspecified setting, there exists a constant $\lambda^*$, such that for any $\lambda\in [0,\lambda^*]$ the following holds:
    \begin{equation}
        \hat{\va}_{\lambda,\cP} \neq \hat{\va}_{\lambda}\,.
    \end{equation}
\end{theorem}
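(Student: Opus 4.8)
The plan is to compare the constrained and unconstrained estimators by showing that the unconstrained ridge solution $\hat{\va}_\lambda$ converges (in the $L^2$ sense on the sphere) to a function close to $g+h$, so that by Assumption \ref{assump:behavior-robust-underspecified} it lies outside $C_{\cP}$, whence it cannot equal $\hat{\va}_{\lambda,\cP}$ which by definition lies in $C_{\cP}$. Concretely, first I would invoke the quoted Theorem 3 of \citet{misiakiewicz2023six}: in the underspecified regime $N > d^{\deg_h+\delta}$, $n > d^{\deg_h+\delta}$ with $\max(N/n,n/N)\ge d^\delta$, the random feature ridge estimator with $\lambda \in [0,\lambda^*]$ achieves test error $\|f_{\rf}(\cdot;\hat{\va}_\lambda) - (g+h)\|_{L^2}^2 = o_{d,\mathbb{P}}(1)$, since $g+h$ is a polynomial of degree $\deg_h$ and both $N$ and $n$ exceed the threshold $d^{\deg_h}$ needed to fit a degree-$\deg_h$ polynomial. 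This is the analytic heart of the argument and the step I would lean on most heavily; the genericity conditions on $\sigma$ are exactly what make the relevant feature-kernel eigenvalues non-degenerate so that the degree-$\deg_h$ component is learnable.

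Next I would translate this vanishing test error into the statement that, for $d$ sufficiently large, $\|f_{\rf}(\cdot;\hat{\va}_\lambda) - (g+h)\|_{L^2} \le R$ with probability approaching $1$, where $R$ is the radius from Assumption \ref{assump:behavior-robust-underspecified}. By that assumption, any $\va'$ within $L^2$-distance $R$ of $g+h$ gives a random feature function that fails $\cP$, so $\hat{\va}_\lambda \notin C_{\cP}$. On the other hand $\hat{\va}_{\lambda,\cP} \in C_{\cP}$ by construction (the constrained problem optimizes over $C_{\cP}$, and $C_{\cP}$ is nonempty since $g$ is approximable within the class — or one can simply note the feasible set is assumed nonempty). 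Two points in disjoint sets (one in $C_{\cP}$, one outside) cannot be equal, so $\hat{\va}_{\lambda,\cP} \neq \hat{\va}_\lambda$, which is the claim. I would state this as holding with probability $1 - o_d(1)$, or, if one wants the clean deterministic-looking statement in the theorem, absorb the high-probability qualifier into the ``$d$ sufficiently large'' phrasing consistent with the $o_{d,\mathbb{P}}(1)$ conventions set up in the notation paragraph.

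The main obstacle I anticipate is making the application of \citet{misiakiewicz2023six}'s Theorem 3 genuinely rigorous: that theorem is stated for approximating a fixed target function with a particular noise model, and here the ``target'' $g+h$ plays the role of the clean signal while $\noise$ is the genuine noise, so I need to check that the degree counting lines up — specifically that $\deg(g+h) = \deg_h$ (using $\deg_g < \deg_h$ so there is no cancellation of the top-degree part) and that the regime $N, n > d^{\deg_h + \delta}$ puts us on the ``both large enough'' side of their phase diagram where the approximation error for a degree-$\deg_h$ polynomial target is $o_{d,\mathbb{P}}(1)$. A secondary subtlety is the uniformity over $\lambda \in [0,\lambda^*]$: I would either cite that their result gives a single $\lambda^*$ with the bound holding throughout $[0,\lambda^*]$, or take $\lambda^*$ to be the infimum of the admissible range and note monotonicity/continuity of the relevant risk in $\lambda$. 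Everything else — the set-disjointness conclusion and the nonemptiness of $C_{\cP}$ — is essentially definitional and should be dispatched in a sentence each.
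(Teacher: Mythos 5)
Your proposal is correct and follows essentially the same route as the paper's proof: invoke Theorem 3 of \citet{misiakiewicz2023six} to show $\|f_{\rf}(\cdot;\hat{\va}_{\lambda})-(g+h)\|_{L^2}$ falls below $R$ for $d$ large, apply Assumption \ref{assump:behavior-robust-underspecified} to conclude $\hat{\va}_{\lambda}\notin C_{\cP}$, and note that $\hat{\va}_{\lambda,\cP}\in C_{\cP}$ by construction. Your additional caveats (the high-probability qualifier, degree counting for $g+h$, uniformity over $\lambda\in[0,\lambda^*]$) are legitimate points that the paper's terse proof glosses over, but they do not change the argument.
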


\begin{proof}
    Using Theorem 3 of \cite{misiakiewicz2023six}, we have that
    \begin{align*}
    \|(g+h)-f_{\rf}(\cdot;\hat{\va}_{\lambda})\|_{L^2}^2 &= 
    o_{d,\mathbb{P}}(1)\cdot(\|g+h\|_{L^{2+\eta}}^2+\tau^2).
    \end{align*}
    for all $\eta>0$. With $d$ sufficiently large, the square root of the quantity on the right hand side will be less than $R$, which combined with Assumption $\ref{assump:behavior-robust-underspecified}$, means $f_{\rf}(\cdot;\hat{\va}_{\lambda})$ does not satisfy $\cP$, and therefore, $\hat{\va}_{\lambda}\notin C_{\cP}$.
\end{proof}


Under two additional assumptions, we can further state a result showing that in the classic setting, for a given range of regularization parameters, we will be able to learn the property $\cP$. At a high level, these assumptions are that 1) the overall impact of $h$ is small, for example, when it only impacts a small portion of the data; and 2) the property $\cP$ is preserved in a small neighborhood of $g$ among the function class learned in the classic setting. Formalizing the assumptions and stating the result:
\begin{assumption}[Small Noise Bias]\label{assump:small-noise}
Assume there is a small constant $\epsilon_h$ such that 
\[
\|h\|_{L^2} \le \epsilon_h\,.
\]
\end{assumption}

\begin{assumption}[Property Robustness in Function Class]\label{assump:behavior-robust}
For any fixed $N$, 
there exists a positive constant $R_N$ such that, 
for all $\va'$ with $\|g-f_{\rf}(\cdot;\va')\|_{L^2}\le R_N$,  the function $f_{\rf}(X,\va')$ satisfies the property $\cP$.
\end{assumption}

\begin{theorem}[Zero Distance in Classic Setting]
    Assume $d^{\deg_g+\delta}<N<d^{\deg_h-\delta}$, $n>d^{\deg_h+\delta}$, $n/N\geq d^\delta$ for a constant $\delta>0$, Assumptions \ref{assump:small-noise}-\ref{assump:behavior-robust} hold, $\sigma$ upholds the genericity conditions, $d$ is sufficiently large, and $R_N> 2 \epsilon_h$. Then, in the classic setting, there exists a constant $\lambda^*$, such that for any $\lambda\in [0,\lambda^*]$, the following holds:
    \begin{equation}
        \hat{\va}_{\lambda,\cP} = \hat{\va}_{\lambda}\,.
    \end{equation}
\end{theorem}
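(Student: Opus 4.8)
The plan is to mirror the structure of the proof of the preceding theorem (Non-vanishing Distance in Underspecified Setting) but in the opposite direction: rather than showing the unconstrained solution leaves $C_{\cP}$, I want to show it stays inside $C_{\cP}$, so that the constraint is inactive and the two optimizers coincide. First I would invoke Theorem~3 of \citet{misiakiewicz2023six} in the classic regime $d^{\deg_g+\delta}<N<d^{\deg_h-\delta}$, $n>d^{\deg_h+\delta}$, $n/N\ge d^\delta$. In this regime the relevant "effective" target that the random feature ridge estimator approximates is the projection of the true regression function onto polynomials of degree at most $\deg_g$ (since $N$ is too small to fit degree $\deg_h$ but large enough to fit degree $\deg_g$). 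Because $g$ is a polynomial of degree exactly $\deg_g$, its projection is $g$ itself, and the $\deg_h$-degree part $h$ lies (to leading order) in the orthogonal complement that the model cannot represent. Concretely I would extract from their theorem a bound of the form
\[
\|g - f_{\rf}(\cdot;\hat{\va}_\lambda)\|_{L^2}^2 = o_{d,\mathbb{P}}(1)\cdot\bigl(\|g+h\|_{L^{2+\eta}}^2+\tau^2\bigr) + \|h\|_{L^2}^2 + o_{d,\mathbb{P}}(1),
\]
valid uniformly for $\lambda$ in some range $[0,\lambda^*]$; the point is that the only non-vanishing contribution is the unlearnable bias $\|h\|_{L^2}^2 \le \epsilon_h^2$ by Assumption~\ref{assump:small-noise}.

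Next, taking $d$ sufficiently large so the $o_{d,\mathbb{P}}(1)$ terms are below $\epsilon_h^2$, I would conclude $\|g-f_{\rf}(\cdot;\hat{\va}_\lambda)\|_{L^2} \le 2\epsilon_h$ with high probability. Then the hypothesis $R_N > 2\epsilon_h$ together with Assumption~\ref{assump:behavior-robust} (Property Robustness in Function Class) immediately gives $f_{\rf}(\cdot;\hat{\va}_\lambda)$ satisfies $\cP$, i.e., $\hat{\va}_\lambda \in C_{\cP}$. Finally, since $\hat{\va}_\lambda$ is the unconstrained minimizer of the (strictly convex, because of the $\lambda\|\va\|_2^2$ term) ridge objective over all of $\R^N$, and it happens to lie in the feasible set $C_{\cP}$ of the constrained problem, it must also be the constrained minimizer; strict convexity makes the minimizer unique, so $\hat{\va}_{\lambda,\cP} = \hat{\va}_\lambda$. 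This last step is essentially the elementary observation that constraining a convex program is vacuous when the unconstrained optimum is already feasible.

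I expect the main obstacle to be the first step: carefully citing and instantiating Theorem~3 of \citet{misiakiewicz2023six} so that it genuinely yields an $L^2$-approximation guarantee to $g$ (not merely to $g+h$ or to some degree-truncated surrogate), and verifying this holds \emph{uniformly} over a range $[0,\lambda^*]$ of ridge parameters rather than at a single $\lambda$. The genericity conditions on $\sigma$ are what make the degree-$\deg_g$ polynomials exactly learnable and the degree-$\deg_h$ part exactly \emph{not} learnable in this regime, so I would need to check that $\deg_g$ and $\deg_h$ are separated by enough of a gap (the $\delta$ slack in $d^{\deg_g+\delta}<N<d^{\deg_h-\delta}$) for the staircase/plateau picture of random-feature learning curves to apply cleanly. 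A secondary subtlety is that $R_N$ in Assumption~\ref{assump:behavior-robust} is allowed to depend on $N$, while $N$ itself scales with $d$; one must ensure the statement "for a sufficiently large $d$" is compatible with $N=N(d)$ growing, which is fine as long as $R_N$ is treated as fixed once the scaling of $N$ with $d$ is fixed, but it is worth flagging in the write-up. Once these are pinned down, the remaining steps (triangle inequality with $\|h\|_{L^2}\le\epsilon_h$, then convexity) are routine.
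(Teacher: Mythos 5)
Your proposal is correct and follows essentially the same route as the paper: invoke Theorem~3 of \citet{misiakiewicz2023six} in the classic regime to show that the only non-vanishing part of the $L^2$ error of $f_{\rf}(\cdot;\hat{\va}_\lambda)$ relative to $g$ is the unlearnable bias $\|h\|_{L^2}\le\epsilon_h$, conclude $\|g-f_{\rf}(\cdot;\hat{\va}_\lambda)\|_{L^2}\le 2\epsilon_h<R_N$, apply Assumption~\ref{assump:behavior-robust} to get $\hat{\va}_\lambda\in C_{\cP}$, and observe that a feasible unconstrained optimizer is also the constrained one. The paper reaches the same $2\epsilon_h$ bound via an explicit triangle inequality ($\|f-g\|^2\le 2(\|f-g-h\|^2+\|h\|^2)\le 4\epsilon_h^2$) and leaves the final convexity/feasibility step implicit, which you spell out; your flagged caveats about uniformity in $\lambda$ and the $N(d)$ dependence of $R_N$ are reasonable points that the paper also does not address in detail.
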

\begin{proof}
Using the triangle inequality and Theorem 3 of \cite{misiakiewicz2023six} again, we have that
\begin{align*}
\|f_{\rf}(\cdot;\hat{\va}_{\lambda})-g\|_{L^2}^2 &\le 
2(\|f_{\rf}(\cdot;\hat{\va}_{\lambda})-g-h\|_{L^2}^2+
\|h\|_{L^2}^2)\\
&\le 4\|h\|_{L^2}^2\\
&\le 4\epsilon_h^2\,.
\end{align*}
Taking the square root, the right hand side of the last term is less than $R_N$ which combined with Assumption \ref{assump:behavior-robust} means $f_{\rf}(\cdot;\hat{\va}_{\lambda})$ satisfies property $\cP$.
%
\end{proof}
 

\begin{remark}
Note that the constant $R_N$ will converge to zero as $N$ grows, meaning the assumption $R_N>2\epsilon_h$ will break down when $N$ is large. For example, in the underspecified setting, this assumption breaks down and hence the distance does not vanish.
\end{remark}

\subsection{Synthetic validation under a relaxed assumption on the noise bias, and conformal correction}

In this secton, we first numerically validate the above theoretical results, in a setting that noise bias only impacts a small fraction of the pre-training data. Next, we apply our proposed procedure of alignment through conformal risk control. We take the desired property $\cP$ to be monotonicity and use the same procedure as on the real-world datasets in Section \ref{sec:monotonicity}. 

We consider $g(X) = \langle X, \beta \rangle$ with $\beta$ a constant in $\mathbb{R}^d$ with $\beta_1 > 0$ and $\|\beta\|_2 =1$. Then let $h(X)=\min(M,\alpha Z_p x_1^4)$ where $Z_p$ is a Bernoulli random variable with a success probability $p$, that is sampled independently for each data point, and $\alpha$ and $M$ are positive constants. This function is aimed to model a biased noise that is capped by magnitude $M$ and only applies to a fraction $p$ of the population. This is to mimic characteristics of a real dataset where only a portion of the data might have the bias (i.e., `bad' data). This is therefore capturing a more realistic case of the stylized model setting, where the measurement error may only occur for a small subset of the data.

Using a large train set size, we fit the data generated according to the outcome model above using the random feature model with $\lambda$ tuned via cross-validation, assuming the ReLU activation function, i.e.,  $\af(x)=\max(0,x)$, and $\vw_j$ is iid $\normal(0,1)$. We then assess how model complexity affects whether the predicted function is monotone in $x_1$ by looking at the univariate partial dependence plot of the predicted function versus the (counterfactual) unbiased outcome, $g(X)$, for the first coordinate $x_1$. We vary model complexity by $N$, the number of hidden nodes in the model.

Applying this synthetic simulation for specific parameters, Figure \ref{fig:stylized-synthetic-sub2} shows that when training a relatively simple model ($N=5$) on the generated data, the model describes a monotonically increasing relationship. From this we conclude that if we had constrained the model to uphold monotonicity in $x_1$, it would not have changed the final model's parameters. Hence, the distance between the constrained and unconstrained model would be zero and we require no conformal correction. In contrast, as seen in Figure \ref{fig:stylized-synthetic-sub3}, a more complex model ($N=5000)$ predicts a non-monotone relationship, and therefore, if we had constrained the model to uphold monotonicity in $x_1$, it would have changed the final model's parameters, such that the distance between the constrained and unconstrained model would be non-zero and we do require a conformal correction to post-process and align this model.

The synthetic simulation seems to validate that with a seemingly large training dataset, increasing model complexity, on the one hand, can enable high performance, but can also make ML models more sensitive to small noise bias in the data, which can subsequently lead to model behavior inconsistent with requirements by the end-user.

\begin{figure}[ht]
\centering
\includegraphics[width=\linewidth]{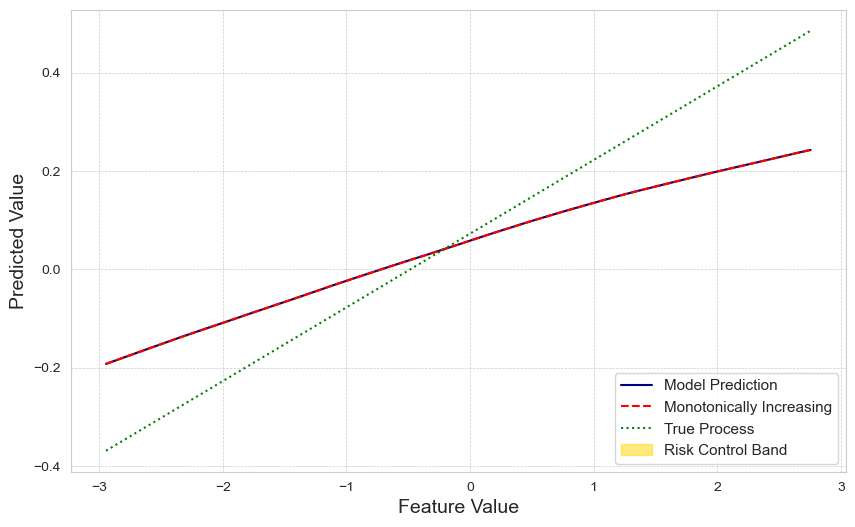}
\caption{Random Feature model (N=5).}
\label{fig:stylized-synthetic-sub2}
\end{figure}

\begin{figure}[ht]
\centering
\includegraphics[width=\linewidth]{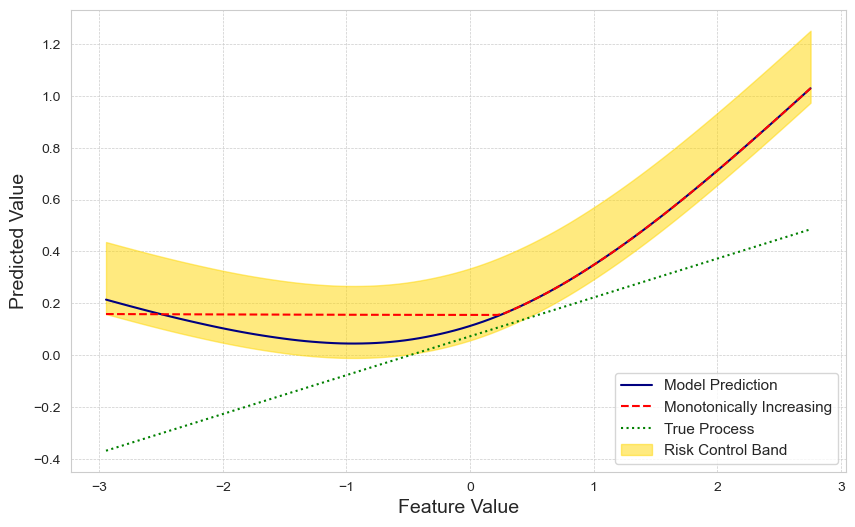}
\caption{Random Feature model (N=5000).}
\label{fig:stylized-synthetic-sub3}
\end{figure}

\section{Connection to Mitigating LLM Hallucinations}\label{sec:hallucination}
Although it is not the main focus of this paper, we show in this section how our general methodology connects to the work of \cite{mitigating}. They examine a potentially random LLM \( f:\cX \to \mathcal{Y} \) and assume access to a confidence score function \( g:\cX \to \mathbb{R} \). They use an abstention function \( a:\Lambda \times \cX \to \{0,1\} \) that decides whether the model should abstain from answering where $a_\lambda(X)=1$ if $g(X) < \lambda$ and $a_\lambda(X)=0$ if $g(X) \geq \lambda$ . The match function \( m:\cX \times \mathcal{Y} \times \mathcal{Y} \to \{0,1\} \) is used to determine if a response \( Y' \) is semantically equivalent to \( Y \), indicating hallucination if not.


\citet{mitigating} use conformal risk control to find an optimal $\hat{\lambda}$ such that the pair $(a_{\hl},f)$ hallucinates at a rate less than $\alpha$. The loss function they use is $\ell:\cX \times \mathcal{Y} \times \Lambda \to \mathbb{R}$ such that $\ell$ punishes failure to abstain when the response does not match the label, that is

\begin{align}
    \ell(X,Y;\lambda) = (1-a_{\lambda}(X))(1-m(X;f(X),Y)) . \label{eq:hallucinating_loss}
\end{align}

In this case we do not change the outputs of the function $f$ itself, instead we use conformal risk control to change the outputs of $a$. Consider the property $\cP$ to be that $(a,f)$ does not hallucinate. A tester for $\cP$ is then simply to query $(a,f)$ on some input $X$ and Reject if $f$ hallucinates or Accept otherwise. Now when $g(X) < \lambda$, we have $C_{\lambda}(X) = \{f(X), \text{Abstain}\}$ meaning that $(a,f)$ can avoid hallucinating, so the tester will Accept. 

\citet{mitigating} use a separate conformal procedures for setting $\lambda$ and a parameter $\beta$ with which they define $m(X;Y',Y)= \mathbb{I}(s(X;Y',Y) \geq \beta)$, where the similarity function $s:\cX \times \cY \times \cY \to \mathbb{R}$ is assumed to be given. We can use our multi-lambda conformal risk control with $\bl=(\lambda_a,\lambda_m)$ to set these parameters simultaneously using a single calibration set $\{(X_i,Y_i)\}_{i=1}^n$ of ground truth query-response pairs sampled from the true data distribution $\cD$. Let

\begin{align*}
    \ell(X,Y;\bl) &= (1-a_{\lambda_a}(X))(1-\mathbb{I}(s(X;f(X),Y) \geq 1-\lambda_m))
\end{align*}

Then we can take $g(\bl)=c_1\lambda_a + c_2 \lambda_m$ for $c_1,c_2 \geq 0$ and choose $\bl$ by our multi-lambda conformal risk control preocedure. As required, this loss function is non-increasing with respect to $\boldsymbol{\Lambda}$ and $g(\bl)$. 

\end{document}